\documentclass{article}

\usepackage{iclr2027_conference,times}
\usepackage{microtype}
\usepackage{graphicx}
\usepackage{booktabs}
\usepackage{hyperref}
\usepackage{amsmath}
\usepackage{amssymb}
\usepackage{amsthm}
\usepackage{url}
\usepackage{float}

\hypersetup{
  pdfsubject={Evidence provenance, grouping, and evidence weight in language models},
  hypertexnames=false,
  hidelinks
}

\newtheorem{theorem}{Theorem}
\newtheorem{proposition}[theorem]{Proposition}
\newtheorem{corollary}[theorem]{Corollary}

\newcommand{\known}{\textsc{Known}}

\newcommand{\PaperUniqueCheckpoints}{6}

\newcommand{\PhaseElevenChoiceTrials}{13,056}
\newcommand{\PhaseElevenCandidateForwardCalls}{26,112}

\newcommand{\PhaseElevenPositivePointCells}{16/16}

\newcommand{\PhaseTwelveChoiceTrials}{3,072}
\newcommand{\PhaseTwelveCandidateForwardCalls}{6,144}
\newcommand{\FinalPaperTrialsRevisionThree}{104,402}

\title{Record Grouping Controls Evidence Weight in Language Models}

\author{Zhongxuan Liu \qquad Sicheng Zhou \qquad Hongzhi Wang\thanks{Corresponding author.}\\
Faculty of Computing, Harbin Institute of Technology\\
\texttt{lazrix@163.com} \quad \texttt{ylnfq\_2021@qq.com}\\
\texttt{wangzh@hit.edu.cn}}

\iclrfinalcopy

\begin{document}
\maketitle
\lhead{Preprint}

\begin{abstract}
  Retrieved records are presentation units; a supplied partition determines which records enter a language model as one evidential contribution.  We characterize the invariant group--content state that removes within-group copies while retaining complementary canonical content, show that equal group counts can encode different evidence states, and derive a sharp content-aware partition-error bound.  Given a supplied partition, our pre-generation representation deduplicates and aggregates content within groups and bounds each group's contribution.  Across \FinalPaperTrialsRevisionThree{} trials and \PaperUniqueCheckpoints{} public checkpoints, a central natural-text intervention finds that content-fixed false splits add 10.27--32.66 percentage points and false merges remove 9.13--31.79 points; a matched six-slot control retains the positive direction in all 16 cells.  In a new 48-item controlled campaign panel, changing the supplied partition produces measurable, checkpoint-dependent decision shifts across all four models, and the balanced mirror design exposes substantial order interactions.  Together, the theory and experiments establish the supplied partition as a controllable pre-generation representation variable and characterize its checkpoint-dependent behavioral effects.
\end{abstract}

\section{Introduction}

Retrieval-augmented systems consume packaging records whose multiplicity can diverge from the number of underlying evidence sources.  A coordinated campaign can render one information-generating process across many pages; raw retrieval serialization then exposes each rendering as a separate model-facing contribution opportunity.  This matters for answer engines, search agents, product recommendation, and any system exposed to coordinated content placement because every choice of chunking, syndication handling, and record emission implicitly chooses an evidence-weighting rule.

Prior work has established that frequency, repeated arguments, source labels, metadata, and document diversity can steer language-model decisions \citep{jin-etal-2024-tug,wan-etal-2024-evidence,chiang-lee-2024-metadata,schuster2026whose,naphade-2026-rational,ross2026redundancy}.  We take this behavioral susceptibility as the starting point.  The systems question is: \emph{which state preserves the content of each supplied evidence unit while removing within-unit copies?}  Once an upstream partition specifies which records share a dependence unit, the representation should preserve complementary content while allocating one bounded contribution per group.  This guarantee resides in the model-facing state and holds independently of behavioral tendencies, prompt instructions, and trained preferences.

Group count alone leaves an essential choice unresolved.  Four distinct evidence elements can be grouped as $\{\{1,2\},\{3,4\}\}$ or $\{\{1,3\},\{2,4\}\}$: both have two groups of size two, yet combine different facts within each unit.  We prove that even bounded group aggregation can distinguish these states.  This motivates retaining group membership together with content.

We group records before generation.  An upstream process supplies the partition.  Model-facing membership labels are opaque: equality encodes membership, while authenticated identity or credibility metadata, when available, live in explicit group signatures.  Within each group, the representation removes exact copies, aggregates complementary passages, and assigns one bounded contribution.  Figure~\ref{fig:introduction} summarizes the introductory logic from repeated records and shared roots to the supplied partition and its two principal error modes.

Table~\ref{tab:phase9-main} provides the central causal test.  Its full-content contrast adds complementary passages within one supplied group; its split and merge contrasts hold the ordered 160 evidence words fixed while changing the emitted partition.  The matched six-slot control fixes every repeated non-content field and tokenizer length.  The experiments condition on supplied operational grouping keys and isolate the downstream record partition.

\begin{figure}[t]
    \centering
    \includegraphics[width=\linewidth]{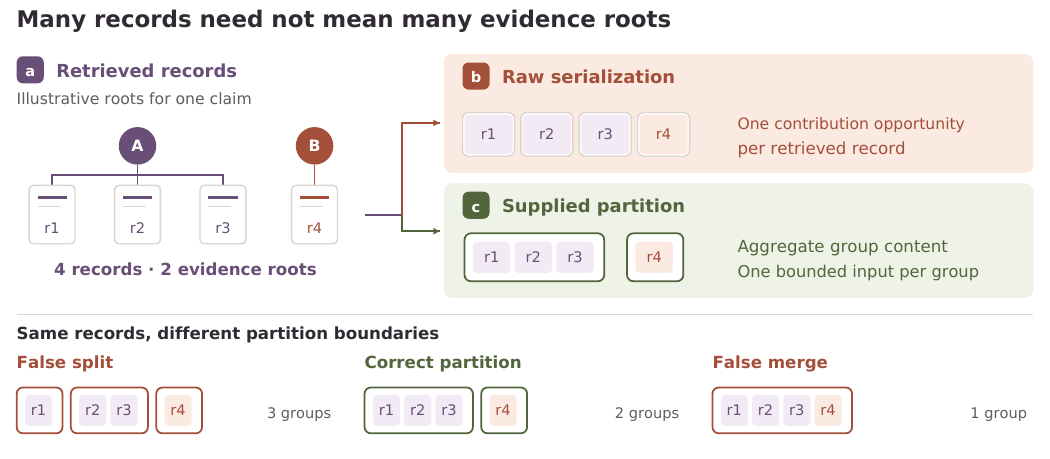}
    \caption{Retrieved records are presentation units and may share claim-relative evidence roots.  Raw serialization allocates weight record by record; a supplied partition enables within-group aggregation and one bounded contribution per group.  False splits and false merges are the two central partition errors.}
    \label{fig:introduction}
\end{figure}

Our contributions are threefold.
\begin{itemize}
    \item We give a pre-generation grouping representation that combines a supplied partition, within-group content aggregation, and one bounded contribution per group.  Exact copies map to the same model-facing state; explicit signatures carry authenticated identity metadata.
    \item We characterize invariant and faithful group--content representations, establish equal-count separation, and derive a sharp content-aware error bound.  Claim-relative roots and the exact text-only recovery boundary specify the upstream provenance problem.
    \item We provide a three-contrast causal decomposition and a matched six-slot control that fixes object count, headers, identifiers, separators, and tokenizer length.  A new controlled panel extends the intervention from exact or partial copies to four cross-genre renderings of one campaign root and a four-independent-root control, revealing checkpoint-specific reversals.
\end{itemize}

Figure~\ref{fig:overview} connects the representation to the theory, controlled interventions, and empirical findings developed in the remainder of the paper.

\begin{figure}[t]
    \centering
    \includegraphics[width=\linewidth]{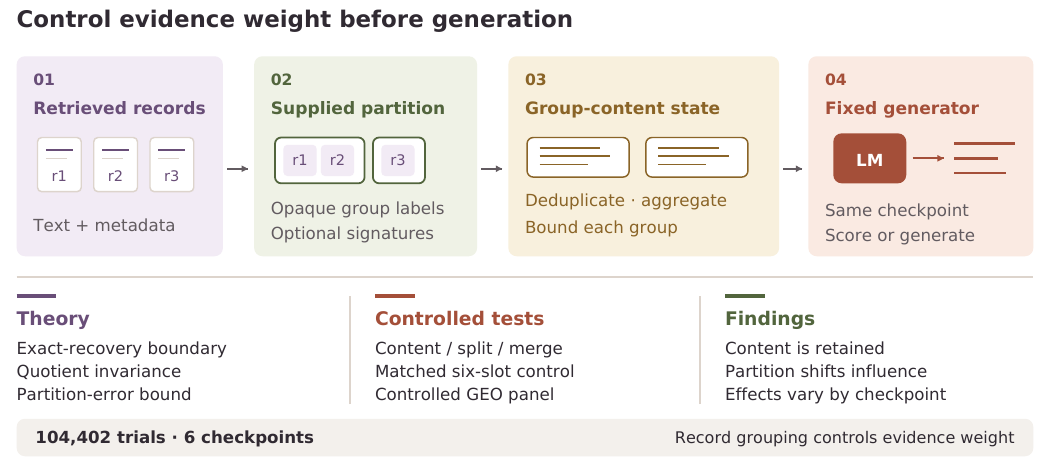}
    \caption{From retrieved records to model-facing influence.  A supplied claim-relative partition maps retrieved records into a group--content state before a fixed generator.  The paper characterizes this representation theoretically, tests it with content-fixed and matched controls, and measures how record boundaries reallocate model-facing influence across checkpoints.}
    \label{fig:overview}
\end{figure}

\section{Related Work}

\paragraph{Context conflicts and source preferences.}
Language models arbitrate inconsistently between parametric memory, contextual claims, and user-supplied specifications \citep{longpre-etal-2021-entity,zhou-etal-2023-context,zhou2024establishing}.  Evidence frequency, rationale framing, metadata, institutional labels, and source authority further alter this balance \citep{jin-etal-2024-tug,sun-etal-2026-task,wan-etal-2024-evidence,chiang-lee-2024-metadata,ge2025resolving,li-etal-2025-llms-trust,liao2026auditing}.  These studies establish that contextual frequency and source cues are behaviorally active.  Our question begins at the next layer: how an external representation makes replication invariance a property of the model-facing state with enforcement independent of checkpoint responses.

\paragraph{Redundancy, diversity, and grouped evidence.}
GroupQA shows that paraphrased documents supporting one argument can outweigh distinct support and documents order effects and unfaithful explanations \citep{naphade-2026-rational}.  Whose Facts Win? shows that repetition can reverse credibility preferences across 13 open-weight models and combines teacher--student LoRA distillation with a credibility-aware prompt to induce approximate repetition invariance \citep{schuster2026whose}.  In a benign fictional-QA setting, \citet{ross2026redundancy} find little correctness gain from duplicates or paraphrases and a large gain from diverse documents.  These works define the closest behavioral and learned-mitigation frontier.  \citet{song2026aggregation} attach canonical identities to observations and maintain mergeable aggregation states outside long-context generation.  We study the representation itself: supplied source-dependence keys induce a group--content state before generation, and content-fixed split/merge interventions with matched six-slot serialization identify the effect of record partition on evidence weight.

\paragraph{Adversarial retrieval and source grouping.}
SearchGEO and generative-engine optimization supply the adversarial application: coordinated web publication can manipulate the evidence retrieved and endorsed by answer systems \citep{chen2026searchgeo,aggarwal2024geo}.  \citet{rahadi2026counting} proposes provenance-graph estimation, effective independent evidence count and confidence-inflation diagnostics, and copy-cluster-discounted answer aggregation.  Its aggregation uses membership information as well as counts.  Our complementary question concerns the state supplied before generation: which invariances it enforces, which within-group content it preserves, and how partition error distorts it.  Equal-count separation below concerns scalar count summaries; the controlled interventions establish checkpoint-dependent responses to record boundaries.  Adversarial tool outputs expose the same problem for agents \citep{zhan-etal-2026-adversarial}, while instruction-hierarchy training improves prioritization of privileged over untrusted inputs \citep{wallace2024instruction}.  The characterization uses standard quotient factorization, with the evidence-specific content and error semantics made explicit.

\section{Partitioned Evidence Before Generation}

\subsection{Claim-Relative Evidence Roots}

For a target claim $c$ and fixed observed record set $I$, let $\rho_c(r)\in\mathcal S_c$ denote the information-generating root from which record $r$ derives its claim-relevant evidence.  Each $r\in I$ is a claim-evidence atom with exactly one claim-relative root.  A physical document drawing on multiple roots enters $I$ only after atomization into such units or assignment to an explicitly defined composite root.  Here ``independent'' denotes distinct roots under this provenance relation; statistical dependence among their observations remains admissible.  The corresponding oracle partition is
\begin{equation}
P_c^\star=\bigl\{\{r\in I:\rho_c(r)=s\}:s\in\rho_c(I)\bigr\}.
\label{eq:evidence-root-partition}
\end{equation}
When admissible worlds vary below, $P_c^\star(w)$ denotes the partition obtained in world $w$; elsewhere we suppress $w$.  Its block count is the \emph{epistemic multiplicity} of the retrieved evidence for $c$; the number of returned records is its presentation multiplicity.  The distinction is claim-relative: two articles can share a measurement for one claim while carrying independently produced evidence for another.

An upstream dependence-discovery process estimates $P_c^\star$ as $\widehat P_c$.  The downstream mechanism accepts either partition as an input.  Opaque labels encode membership; authenticated signatures carry provenance and credibility.  Under $P_c^\star$, another rendering from an existing root may enrich canonical content inside that block while the group-level summand count stays fixed.

For a fixed claim and record count, let $Z_c(w)$ be the complete visible text in an admissible world $w$.  Universal deterministic text-only recovery exists exactly when $Z_c(w_0)=Z_c(w_1)$ implies $P_c^\star(w_0)=P_c^\star(w_1)$ (Proposition~\ref{prop:text-only-provenance}).

Equivalently, the oracle partition is constant on every observational fiber of $Z_c$.  Similarity models provide useful evidence under distributional assumptions; universal exact recovery is available precisely when worlds with identical visible text share the same oracle partition.  Appendix~\ref{app:text-only-proof} proves the characterization.

\subsection{Invariant and Faithful Group--Content State}

The representation retains the canonical content of every supplied group while removing repeated occurrences within that group.

Let $\mathcal K$ be an infinite set of opaque membership labels and let $\mathcal Z_{\rm ev}$ contain canonical evidence elements.  A label's spelling carries no evidential meaning; equality and inequality encode group membership.  An upstream process supplies the partition, and authenticated provenance, when available, can justify it; the model-facing label remains local bookkeeping.  A finite grouped configuration is $x=((k_i,z_i))_{i=1}^n\in\mathsf{Cfg}=(\mathcal K\times\mathcal Z_{\rm ev})^*$.  Let $K_x$ be its set of occupied labels and, for each $k\in K_x$, let $V_x(k)=\{z_i:k_i=k\}$.  The canonical evidence quotient is the finite multiset
\begin{equation}
q_{\rm ev}(x)=\sum_{k\in K_x}\delta_{V_x(k)}\in
\mathsf{Quot}=\mathsf M_{\rm fin}(\operatorname{Fin}^+(\mathcal Z_{\rm ev})).
\label{eq:ceq-map}
\end{equation}
The outer object is a multiset, so distinct groups with identical content remain distinct; each inner object is a set, so another canonical copy inside one group disappears.  Declare $x\sim y$ when $q_{\rm ev}(x)=q_{\rm ev}(y)$.  This relation removes record order, bijective renaming of opaque occupied labels, and within-group canonical copies while preserving complementary elements and every nontrivial split or merge as distinct quotient states.  Appendix~\ref{app:ceq-proofs} gives the signature-augmented version that carries authenticated identity or credibility as an evidence-bearing coordinate.

The representation has three nuisance invariances: record permutation, bijective renaming of occupied opaque labels, and within-group insertion or deletion of an exact canonical duplicate while one occurrence remains.  Faithfulness asks that all distinctions between the resulting group--content states remain recoverable.
\begin{theorem}[Invariant and faithful evidence representations]
\label{prop:ceq-universal-factorization}
For any set $Y$ and representation $\Psi:\mathsf{Cfg}\to Y$, the following are equivalent: (i) $\Psi$ has the three nuisance invariances above; (ii) $\Psi(x)=\Psi(y)$ whenever $x\sim y$; and (iii) there is a unique $\bar\Psi:\mathsf{Quot}\to Y$ such that
\begin{equation}
\Psi=\bar\Psi\circ q_{\rm ev}.
\label{eq:ceq-factorization}
\end{equation}
Among these invariant representations, preserving every relation-invariant observable is equivalent to injectivity of $\bar\Psi$, and to recoverability of $q_{\rm ev}$ from $\Psi$.  Thus faithful invariant states are precisely injective recodings of $q_{\rm ev}$.
\end{theorem}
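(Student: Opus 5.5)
The plan follows the standard quotient-factorization argument; the one evidence-specific step is showing that $\sim$ is generated by the three nuisance moves.

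For (ii)$\Leftrightarrow$(iii), we factor through the image of $q_{\rm ev}$ and define $\bar\Psi(q_{\rm ev}(x))=\Psi(x)$. This is well defined exactly when (ii) holds. Every element of $\mathsf{Quot}$ is attained because $\mathcal K$ is infinite: given a finite multiset $\sum_j\delta_{S_j}$ of nonempty finite sets, we choose distinct labels $k_j$ and list the pairs $(k_j,z)$ for $z\in S_j$. So $q_{\rm ev}$ is surjective, which gives uniqueness of $\bar\Psi$. The converse (iii)$\Rightarrow$(ii) is immediate.

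For (i)$\Leftrightarrow$(ii), we first check that each nuisance move preserves $q_{\rm ev}$:
\begin{itemize}
\item a permutation does not change $K_x$ or any $V_x(k)$;
\item a bijective relabeling $\pi$ of occupied labels gives $V_y(\pi k)=V_x(k)$, so the multiset is unchanged;
\item inserting or deleting a duplicate $(k,z)$ while another occurrence of $(k,z)$ remains leaves the set $V_x(k)$ unchanged.
\end{itemize}
Hence (ii)$\Rightarrow$(i).

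The substantive direction is (i)$\Rightarrow$(ii): if $q_{\rm ev}(x)=q_{\rm ev}(y)$, then $x$ and $y$ are connected by a finite chain of nuisance moves. The plan uses a normal form.
\begin{itemize}
\item Use duplicate deletions to reduce $x$ until each pair $(k,z)$ occurs exactly once.
\item Permute the records so they are sorted by label and then by some fixed listing of the content. The content order can be arbitrary, since only the resulting list needs to match.
\item Since $q_{\rm ev}(x)=q_{\rm ev}(y)$, there is a bijection $\beta:K_x\to K_y$ with $V_y(\beta k)=V_x(k)$; this matches the multiplicity of each set. Relabel the reduced $x$ by $\beta$.
\item The reduced, relabeled $x$ and the reduced $y$ now contain the same set of pairs $(k,z)$, each once. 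A permutation makes them identical.
\end{itemize}
One subtlety in this step is that the relabeling invariance should be read as a bijection from $K_x$ onto $\pi(K_x)$, with unoccupied labels free. Relabeling onto labels already used by $y$ is then legitimate, because it is one global injective rename of the occupied set. If the invariance is read only as a permutation of $\mathcal K$, an injection on a finite set extends to a bijection of $\mathcal K$, so either reading works. Because the three invariances are all equalities, $\Psi$ is constant along the chain, and therefore $\Psi(x)=\Psi(y)$.

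For faithfulness, we define relation-invariant observables as functions $f:\mathsf{Cfg}\to Z$ that are constant on $\sim$-classes. By the factorization above, these are exactly the functions $g\circ q_{\rm ev}$. Preserving every such observable means each $f$ factors as $h\circ\Psi$. Applying this to $f=q_{\rm ev}$ itself gives recoverability: $q_{\rm ev}=h\circ\bar\Psi\circ q_{\rm ev}$. By surjectivity of $q_{\rm ev}$ this yields $h\circ\bar\Psi=\mathrm{id}$, so $\bar\Psi$ is injective. Conversely, if $\bar\Psi$ is injective, it has a left inverse $L$ on its image. Then any $g\circ q_{\rm ev}$ equals $(g\circ L)\circ\Psi$, and recoverability follows by the same argument. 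These three conditions coincide, so faithful invariant states are exactly injective recodings $\bar\Psi$ of $q_{\rm ev}$.

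I expect the main obstacle to be the chain-of-moves normal form, specifically making the relabeling and deduplication steps precise with respect to the stated invariances. The rest is routine.
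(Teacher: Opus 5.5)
Your proposal is correct and takes essentially the same route as the paper. You get surjectivity of $q_{\rm ev}$ from the infinite label set, show the fibers are generated by a label bijection matching equal content sets together with within-group duplicate deletion and a permutation (the paper's operational-quotient proposition, with the moves in a different order), and obtain faithfulness by applying universal sufficiency to $q_{\rm ev}$ itself and using a left inverse of the injective $\bar\Psi$ extended off its image.
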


The three operations generate exactly the fibers of $q_{\rm ev}$: match groups with equal content sets, rename their labels, delete duplicate occurrences, and permute records.  Standard quotient factorization then gives the characterization; choosing $q_{\rm ev}$ itself as the observable gives recoverability.  The minimality is relative to preserving \emph{all} nuisance-invariant information under a fixed canonicalizer.  A specified downstream task can use a coarser state.  Appendix~\ref{app:ceq-proofs} supplies the full proofs and the downstream-map formulation.

The group-additive construction is one factorized realization.  Let $U$ be an ambient record universe, let $z_{\rm ev}:U\to\mathcal Z_{\rm ev}$ be one fixed canonicalizer, let $(\mathcal X,\|\cdot\|)$ be a real normed vector space, and let $a$ map finite subsets of $\mathcal Z_{\rm ev}$ into $\mathcal X$.  For a finite $I\subseteq U$ and a partition $P$ of $I$, define
\begin{equation}
    E_I(P)=\sum_{G\in P}a\!\left(\{z_{\rm ev}(i):i\in G\}\right).
    \label{eq:grouped-evidence}
\end{equation}
Writing $q_P=\sum_{G\in P}\delta_{\{z_{\rm ev}(i):i\in G\}}$ and $\Phi_a(m)=\sum_Sm(S)a(S)$ gives $E_I(P)=\Phi_a(q_P)$.  Hence $E_I$ always factors through the quotient; quotient faithfulness holds exactly when the aggregator separates the reachable quotient states, as characterized in Appendix~\ref{app:ceq-proofs}.

\begin{proposition}[Equal-count separation]
\label{prop:equal-count-separation}
Let $z_1,z_2,z_3,z_4$ be distinct canonical elements attached to four fixed records, and set
\[
P_A=\{\{1,2\},\{3,4\}\},\qquad
P_B=\{\{1,3\},\{2,4\}\}.
\]
The record content, group count, and block-size multiset agree, but $q_{P_A}\ne q_{P_B}$.  For every $B>0$ there is a fixed scalar aggregator with $|a(S)|\le B$ for which $E_I(P_A)=2B$ and $E_I(P_B)=-2B$.
\end{proposition}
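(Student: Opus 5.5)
The proof is a direct computation in two parts: first separate the quotient states, then exhibit a bounded aggregator.

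\textbf{Shared summaries.} Both partitions use the same four records with contents $z_1,\dots,z_4$. Each has two blocks, and in each case the block-size multiset is $\{2,2\}$. These three facts are immediate.

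\textbf{Separation of $q_{P_A}$ and $q_{P_B}$.} By definition,
\[
q_{P_A}=\delta_{\{z_1,z_2\}}+\delta_{\{z_3,z_4\}},
\qquad
q_{P_B}=\delta_{\{z_1,z_3\}}+\delta_{\{z_2,z_4\}}.
\]
Two finite multisets are equal exactly when they have the same support with the same multiplicities. The support of $q_{P_A}$ contains the set $\{z_1,z_2\}$. Because the $z_j$ are distinct, $\{z_1,z_2\}$ differs from both $\{z_1,z_3\}$ (it contains $z_2$, which $\{z_1,z_3\}$ does not) and $\{z_2,z_4\}$ (it contains $z_1$). So $\{z_1,z_2\}$ lies outside the support of $q_{P_B}$, and $q_{P_A}\neq q_{P_B}$.

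The distinctness hypothesis is the only place any care is needed. If, say, $z_2=z_3$, the inner sets could coincide and the argument would fail.

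\textbf{The bounded aggregator.} Take $\mathcal X=\mathbb R$ and define $a$ on $\operatorname{Fin}^+(\mathcal Z_{\rm ev})$ by
\[
a(S)=
\begin{cases}
B, & S\in\bigl\{\{z_1,z_2\},\{z_3,z_4\}\bigr\},\\
-B, & S\in\bigl\{\{z_1,z_3\},\{z_2,z_4\}\bigr\},\\
0, & \text{otherwise}.
\end{cases}
\]
This is well defined because the four listed sets are pairwise distinct, again by distinctness of the $z_j$. Clearly $|a(S)|\le B$ for every $S$, and $a$ is fixed independently of the partition.

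Using $E_I(P)=\Phi_a(q_P)$, or summing directly over blocks, gives
\[
E_I(P_A)=B+B=2B,
\qquad
E_I(P_B)=-B-B=-2B.
\]
This completes the proof.
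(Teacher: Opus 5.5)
Your proposal is correct and follows the paper's own argument: the paper uses the same witness aggregator, with $+B$ on $\{z_1,z_2\},\{z_3,z_4\}$, $-B$ on $\{z_1,z_3\},\{z_2,z_4\}$, and zero elsewhere. Your support comparison for $q_{P_A}\neq q_{P_B}$ spells out a step the paper leaves implicit; that step also follows from the witness itself, since $E_I(P)=\Phi_a(q_P)$ takes different values on the two partitions.
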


For the witness, assign $+B$ to $\{z_1,z_2\}$ and $\{z_3,z_4\}$, $-B$ to $\{z_1,z_3\}$ and $\{z_2,z_4\}$, and zero elsewhere.  Thus the same global content and count can support distinct bounded evidence states.  Count records how many units exist; the group--content state also records which facts they combine.  This is a representation-level separation; the model experiments below measure content-fixed split/merge responses.

\begin{proposition}[Exact-replication invariance]
\label{prop:external-replication-invariance}
If a new record $i'\in U\setminus I$ is added to the block containing $i$ and $z_{\rm ev}(i')=z_{\rm ev}(i)$, then the evidence state in Equation~\eqref{eq:grouped-evidence} is unchanged.
\end{proposition}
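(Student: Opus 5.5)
The plan is a direct computation with Equation~\eqref{eq:grouped-evidence}, routed through the quotient factorization $E_I(P)=\Phi_a(q_P)$ established above. Let $P$ be the partition of $I$ and $G\in P$ the block containing $i$. The augmented record set is $I'=I\cup\{i'\}$, and the augmented partition is
\[
P'=(P\setminus\{G\})\cup\{G'\},\qquad G'=G\cup\{i'\}.
\]
Since $i'\notin I$, $P'$ is a genuine partition of $I'$: its blocks are disjoint, nonempty, and cover $I'$. The target equality is $E_{I'}(P')=E_I(P)$.

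The first step is to compare the block content sets. For every block $H\in P\setminus\{G\}$, the block appears unchanged in $P'$, so its canonical content set $\{z_{\rm ev}(j):j\in H\}$ is identical. For the modified block, we compute
\[
\{z_{\rm ev}(j):j\in G'\}=\{z_{\rm ev}(j):j\in G\}\cup\{z_{\rm ev}(i')\}=\{z_{\rm ev}(j):j\in G\},
\]
because $z_{\rm ev}(i')=z_{\rm ev}(i)$ and $i\in G$. This is the only substantive point: the inner object is a \emph{set}, so the repeated canonical element is absorbed.

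The second step is to conclude that $P$ and $P'$ have the same blocks apart from $G\mapsto G'$, that the bijection $H\mapsto H$ (for $H\neq G$) together with $G\mapsto G'$ preserves content sets, and that both partitions have the same number of blocks. Hence
\[
q_{P'}=\sum_{H\in P'}\delta_{\{z_{\rm ev}(j):j\in H\}}=q_P.
\]
Applying $\Phi_a$ then gives $E_{I'}(P')=\Phi_a(q_{P'})=\Phi_a(q_P)=E_I(P)$. Alternatively, one can compare the two sums in Equation~\eqref{eq:grouped-evidence} termwise: every summand $a(\cdot)$ coincides, so the sums agree for any aggregator $a$ and any normed space $\mathcal X$. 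No property of $a$ is used.

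I expect no real obstacle here. The only care needed is bookkeeping: stating explicitly that the block count does not change (adding $i'$ to an existing block does not create a new block), and noting that $i'\notin I$ ensures the augmented partition is well defined. The result can also be read as the within-group duplicate-insertion invariance in Theorem~\ref{prop:ceq-universal-factorization}, since $E_I$ factors through $q_{\rm ev}$, but the direct computation is self-contained.
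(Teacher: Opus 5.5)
Your proposal is correct and matches the paper's proof: form $P'=(P\setminus\{G\})\cup\{G\cup\{i'\}\}$, note that the modified block has the same canonical content set because $z_{\rm ev}(i')=z_{\rm ev}(i)$, and conclude that the two sums agree term by term. Routing the argument through $q_{P'}=q_P$ and $\Phi_a$ is just an equivalent repackaging of that same step.
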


The construction uses a supplied grouping key, a fixed canonicalizer, and an extensional set-valued group input; its proof and the corresponding grouping-error bound are in Appendix~\ref{app:grouping-guarantees}.  A canonicalizer determines which copies share an element, while complementary passages remain separate elements within one group.  More generally, let $J\subseteq U\setminus I$ be a finite set of new records that joins an existing block $G\in P$, and set $P'=(P\setminus\{G\})\cup\{G\cup J\}$.  Their entire effect is
\begin{equation}
E_{I\cup J}(P')-E_I(P)
=a(A_G\cup A_J)-a(A_G),
\label{eq:same-root-proliferation}
\end{equation}
where $A_G=\{z_{\rm ev}(i):i\in G\}$ and $A_J=\{z_{\rm ev}(j):j\in J\}$.  When $P=P_c^\star$ and every record in $J$ shares the claim-relative root of $G$, this is same-root proliferation.  The number of outer contributions stays fixed; the state changes only through genuinely new canonical content inside the root.

\subsection{From Partition Error to Decision Stability}
For two partitions $P,Q$ of the same records, define the content-aware discrepancy
\[
\Delta_q(P,Q)=\|q_P-q_Q\|_1
=\sum_A |q_P(A)-q_Q(A)|.
\]
It cancels matching group--content sets even when their record identities differ.  With a fixed canonicalizer and $\|a(A)\|\le B$, Proposition~\ref{prop:content-aware-error} gives
\begin{equation}
\|E_I(P)-E_I(Q)\|\le B\Delta_q(P,Q)\le BH(P,Q),
\label{eq:provenance-discovery-bound}
\end{equation}
where $H$ counts the blocks in changed overlap components (Appendix~\ref{app:grouping-guarantees}).  The first bound is exact in the worst case over bounded scalar aggregators for each fixed pair $P,Q$.  Equal-count reassignment can attain $4B$, as the preceding witness shows.  For an $L$-Lipschitz decision margin $m$, its sign is stable whenever $|m(E_I(P))|>LB\Delta_q(P,Q)$.  Taking $P=P_c^\star$ and $Q=\widehat P_c$ connects upstream partition error to representation distortion and then to decision stability.  The numerical constants belong to a specified representation and downstream map; the language-model experiments measure behavioral effects separately.

The natural-text experiment uses externally supplied grouping metadata.  HUMAN keys denote crowd-annotated evidence units; WEB copies share canonical URLs, and its Four-Unit records satisfy frozen domain, URL, and text-hash separation criteria.  These are operational units, while authenticated identity and credibility remain explicit signature coordinates.

\section{Experimental Program}

\subsection{Panels, Scoring, and Statistics}

The natural audit analyzes 101 PERSPECTRUM claims in 49 dependence components \citep{chen2019perspectrum} and 138 ConflictingQA questions in 135 components \citep{wan-etal-2024-evidence,chiang-lee-2024-metadata} as separate domains.  The grouping panel uses 66 HUMAN claims passing a frozen document filter and all 138 WEB questions; generation uses 40 components per domain.

Candidate-choice audits score two complete, length-matched assistant answers with native termination sequences by total conditional log likelihood.  Exact ties score zero in the primary analysis and one half in sensitivity analysis.  Controlled generation separately scores the parsed leading \texttt{Yes}/\texttt{No} over all outputs.

For trial $i$, let $x_i$ be the rendered prompt, let $a_i$ and $b_i$ be the attack-side and opposite complete candidates, and let $u_{i,c,1:m_{ic}}$ be candidate $c$'s scoring suffix, including the checkpoint's native termination sequence.  We compute
\begin{equation}
\begin{aligned}
\ell_i(c)
&=\sum_{t=1}^{m_{ic}}\log p_\theta(u_{i,c,t}\mid x_i,u_{i,c,<t}),\\
p_{{\rm attack},i}
&=\frac{\exp\ell_i(a_i)}{\exp\ell_i(a_i)+\exp\ell_i(b_i)}
=\frac{1}{1+\exp(\ell_i(b_i)-\ell_i(a_i))}.
\end{aligned}
\label{eq:attack-probability}
\end{equation}
Thus $p_{\rm attack}$ is the likelihood share over the two frozen candidates.

We average mirrored assignments and orders within item before each contrast.  Fixed-seed 10,000-replicate bootstraps resample items or natural-text dependence components; estimates remain checkpoint- and domain-specific.  Appendix~\ref{app:phase8-specification}--\ref{app:extended-specifications} gives full specifications.

\subsection{Central Grouping Intervention and Breadth Studies}

For the one-group partial-copy panel, four non-overlapping 40-word windows come from one frozen document.  For the four-group panel, four windows come from four operationally distinct supplied units.  We hold the supplied keys fixed, inject false splits or false merges downstream, aggregate content within resulting groups, and emit one evidence record per estimated group.  The main comparisons hold the ordered 160 attack words fixed.  Across Qwen3-8B and Qwen3-4B \citep{yang2025qwen3}, Phi-4-mini \citep{microsoft2025phi4mini}, and Mistral-7B \citep{jiang2023mistral}, this experiment contributes 39,168 trials.  A further \PhaseElevenChoiceTrials{}-trial control uses six JSON slots in both arms, retains R1--R6 and every header and separator, matches final character, byte, and tokenizer lengths, and varies whether the fixed word stream occupies one or four nonempty content-bearing slots.

\paragraph{Controlled campaign and independent-root panel.}
We construct 48 fictional product pairs.  For each attack side, four 40-word cross-genre records deterministically restate one frozen three-fact campaign brief and share one oracle root; a second family contains four 40-word records tied to independently specified laboratory, panel, endurance, and service-data roots.  Every visible product claim is bound to an enumerated atomic fact before inference.  Hidden root, author, and synthetic-domain fields never enter the prompt.  In both families, the Phase 11 six-slot renderer compares $[160,0,0,0]$ with $[40,40,40,40]$ while fixing the ordered word stream, six objects, R1--R6 fields, side sequence, final characters, UTF-8 bytes, and model-specific token length.  Two attack sides and two order mirrors yield \PhaseTwelveChoiceTrials{} choice trials over the same four checkpoints.

Supporting audits cover authority cues, explicit relation/count stages, repetition dose, natural-text copies, added Qwen3-14B-AWQ and Phi-4 checkpoints \citep{yang2025qwen3,abdin2024phi4}, quantization, and controlled generation.  Together with the central grouping, matched-serialization, and provenance panels, the program contains \FinalPaperTrialsRevisionThree{} trials across \PaperUniqueCheckpoints{} unique checkpoints.  Appendix~\ref{app:repro} gives complete designs, counts, and lineage.

\section{Results}

\subsection{Grouping Separates Content Retention from Partition Effects}
\label{sec:phase9-results}

\begin{table}[t]
\caption{Central causal decomposition of content retention and record partition, in percentage points with 95\% dependence-component bootstrap intervals. Full-content gain compares the 160-word one-group record with its 40-word endpoint. The split and merge contrasts hold the ordered 160 attack words fixed; positive values denote influence gained by splitting one supplied group or influence lost by merging four supplied groups. The columns identify complementary-content retention, false-split inflation, and false-merge suppression. Cells report claims/dependence components: HUMAN 66/32 and WEB 138/135.}
\label{tab:phase9-main}
\centering
\footnotesize
\setlength{\tabcolsep}{1.3pt}
\renewcommand{\arraystretch}{1.08}
\begin{tabular}{@{}llrrr@{}}
\toprule
Checkpoint & Domain & \shortstack{Full-content\\gain} & \shortstack{One supplied\\unit: split} & \shortstack{Four supplied\\units: merge loss} \\
\midrule
Qwen3-8B & HUMAN (66/32) & $+13.98[+8.65,+19.82]$ & $+26.38[+20.51,+32.36]$ & $+25.17[+18.26,+33.11]$ \\
Qwen3-8B & WEB (138/135) & $+14.30[+11.24,+17.51]$ & $+25.23[+21.43,+29.19]$ & $+24.65[+20.55,+28.77]$ \\
Qwen3-4B & HUMAN (66/32) & $+29.82[+25.39,+34.45]$ & $+20.61[+15.38,+25.68]$ & $+31.79[+27.05,+38.06]$ \\
Qwen3-4B & WEB (138/135) & $+15.35[+11.80,+18.82]$ & $+24.43[+20.99,+28.01]$ & $+31.56[+27.46,+35.81]$ \\
Phi-4-mini & HUMAN (66/32) & $+4.96[+2.49,+8.14]$ & $+14.46[+9.81,+19.94]$ & $+9.13[+6.02,+13.08]$ \\
Phi-4-mini & WEB (138/135) & $+2.91[+1.07,+4.80]$ & $+10.27[+7.64,+13.23]$ & $+10.10[+7.65,+12.76]$ \\
Mistral-7B & HUMAN (66/32) & $+18.93[+14.27,+24.16]$ & $+32.66[+28.59,+37.74]$ & $+31.15[+27.18,+35.64]$ \\
Mistral-7B & WEB (138/135) & $+18.47[+14.99,+22.08]$ & $+25.21[+21.99,+28.49]$ & $+26.69[+23.00,+30.51]$ \\
\bottomrule
\end{tabular}
\end{table}

Table~\ref{tab:phase9-main} is the paper's central causal-identification test.  It holds the supplied operational keys fixed, intervenes on downstream aggregation and rendering, and decomposes evidence weighting into three interpretable contrasts.

First, the full-content arm exact-deduplicates and concatenates four non-overlapping windows from one supplied unit into a single 160-word group record.  Relative to the one-window endpoint, the additional complementary content raises attack-side probability by 2.91--29.82 points, with all eight confidence intervals above zero.  One-group normalization therefore combines complementary within-unit content retention with multiplicity control.

Second, falsely splitting that same supplied unit into four records raises attack-side probability by 10.27--32.66 points.  Third, falsely merging four supplied units into one record suppresses their influence by 9.13--31.79 points.  Every interval is positive.  In both contrasts, the evidence words and their order are identical across arms; after the supplied keys are fixed, the intervention changes the record partition and its serialization.  Together, the three contrasts disentangle complementary-content retention, multiplicity inflation from false splits, and influence suppression from false merges.

The original renderer fixes the instruction, query, claim, anchor content, chat adapter, candidates, and scoring suffix.  Its four-record implementation introduces three additional JSON objects with headers, separators, and ordinal identifiers, corresponding to 43--63 additional input tokens across the four tokenizers and an evidence-token difference bounded by five.  The matched six-slot control fixes the full serialization skeleton: both arms contain the same six objects, R1--R6 identifiers, side labels, headers, separators, template text, ordered evidence word stream, final characters and bytes, and tokenizer input length; the content spans one or four attack-side slots.  Its effects are positive in \PhaseElevenPositivePointCells{} cells and 12 confidence intervals lie entirely above zero, ranging from $+0.63$ to $+13.33$ points.  The matched design identifies content-bearing record placement under equal object, header, and token counts.  Outcome-blind tail whitespace maintains exact tokenizer length.  Appendix~\ref{app:extended-specifications} gives the full accounting and model-specific estimates.

A post-hoc multiplicity sensitivity applies Bonferroni familywise 95\% bootstrap intervals to the 24 Table~\ref{tab:phase9-main} cells.  Positive lower bounds remain for all 16 content-fixed split/merge effects and seven of eight content-retention effects; only Phi-4-mini/WEB content retention crosses zero by 0.02 points.  The matched six-slot family retains 11 of 16 positive lower bounds after the same correction (Appendix~\ref{app:multiplicity-sensitivity}).

\subsection{Controlled GEO Renderings Expose Checkpoint-Dependent Partition Effects}
\label{sec:phase12-results}

\begin{table}[t]
\centering
\caption{Controlled campaign and independent-root interventions.  Values are percentage-point changes in $p_{\rm attack}$ with pointwise 95\% dependence-component bootstrap intervals.  Same-root split is four content-bearing records minus one grouped record; independent-root merge loss is four records minus one falsely merged record.  Positive values are the preregistered directions; checkpoints are reported without pooling.}
\label{tab:phase12-geo}
\small
\begin{tabular}{lrr}
\toprule
Model & \shortstack{Same-root\\split} & \shortstack{Independent-root\\merge loss} \\
\midrule
Qwen3-8B & $+18.46[+15.55,+21.41]$ & $-4.25[-6.86,-1.87]$ \\
Qwen3-4B & $+9.66[+6.59,+12.54]$ & $+5.39[+3.12,+7.94]$ \\
Phi-4-mini & $+27.44[+24.64,+30.25]$ & $+15.73[+12.87,+18.70]$ \\
Mistral-7B & $-4.44[-7.35,-1.64]$ & $+3.02[+1.63,+4.61]$ \\
\bottomrule
\end{tabular}
\end{table}

Table~\ref{tab:phase12-geo} extends the content-fixed six-slot intervention to cross-genre records from one campaign root and to four independently specified roots.  The same-root split effect is positive for Qwen3-8B, Qwen3-4B, and Phi-4-mini but negative for Mistral-7B.  Thus coordinated proliferation can gain model-facing influence without exact copying, while the Mistral reversal establishes a checkpoint-dependent behavioral sign.

The independent-root merge loss is positive for Qwen3-4B, Phi-4-mini, and Mistral-7B but reverses for Qwen3-8B.  The balanced average is a boundary-sensitivity stress test: mirror decomposition shows large order interactions (for Qwen3-8B, the independent-root contrast is $+41.43$ points in original order and $-49.92$ in reverse order).  Record-boundary placement is therefore behaviorally active, with its sign determined jointly by checkpoint, content family, and presentation.  The representation guarantee fixes one bounded group-level contribution for each supplied group; checkpoint-specific behavioral directions remain empirical.

Appendix Table~\ref{tab:geo-order-decomposition} reports a paired post-hoc order decomposition of these same predictions for every checkpoint and root family.  It estimates the order interaction within each item before resampling, preserving the two attack-side mirrors.  The balanced effects in Table~\ref{tab:phase12-geo} are recovered exactly by averaging the two orders.

\subsection{Record Copies Reweight Candidate and Generated Decisions}
\label{sec:phase8-results}

Across six checkpoints, four copies of one supplied unit shift attack-side candidate decisions by 9.24--51.98 points; every model--domain interval excludes replication invariance in the observed direction (Figure~\ref{fig:phase8} and Appendix Tables~\ref{tab:phase8} and~\ref{tab:phase9-breadth}).

The original HUMAN and WEB ranges are 14.11--49.01 and 9.24--37.14 points.  Added 14B checkpoints reach 30.62--51.98 points.  In controlled answer-plus-one-sentence generation, raw copying moves the leading answer by 15.62--20.62 points across three checkpoints, with positive lower bounds and 2,880/2,880 valid outputs.  Appendix~\ref{app:extended-specifications} reports prompt-rule heterogeneity, threshold sensitivity, and 956 byte-identical grouping-recovery checks per compiled grid.

\section{Discussion}

\paragraph{Partition is an evidence-accounting decision.}
The same claim can reach a generator as one record, several copied records, several complementary passages, or several records carrying one supplied key.  Table~\ref{tab:phase9-main} shows that this packaging actively allocates evidential influence: complementary content survives one-group aggregation, false splits inflate influence, and false merges suppress it.  The matched six-slot control preserves the predicted direction in every cell after equalizing serialization counts and tokenizer length.  Table~\ref{tab:phase12-geo} shows that cross-genre boundary placement remains active while its sign can reverse across checkpoints.  A RAG pipeline therefore chooses an evidence-weighting rule whenever it chooses how to chunk, duplicate, group, and serialize retrieval results.  Evidence independence tracks claim-relevant information-generating roots; surface diversity and URL count describe presentation, and fixed-model sensitivity to serialization is measured separately.

\paragraph{Dependence-aware weighting belongs before generation.}
A source-aware system separates four responsibilities: discover an upstream dependence partition, group records, aggregate complementary within-group content, and normalize group-level weight.  The supplied blocks can encode exact record identity, article lineage, or an application-defined claim-relative evidence root.  Model-facing membership labels remain opaque, while explicit group signatures carry authenticated identity and credibility.  On the controlled panel, exact hash and MinHash miss every cross-genre same-root grouping, whereas a fixed-threshold sentence-embedding baseline \citep{reimers-gurevych-2019-sentence} separates both provenance structures; Appendix Table~\ref{tab:phase12-grouping-baselines} reports the full partition audit.  The deliberately structured templates provide a controlled-panel sanity check for semantic grouping.  Proposition~\ref{prop:text-only-provenance} places universal exact recovery at observational-fiber constancy.  Once a partition is supplied, aggregation and per-group normalization enforce the chosen accounting rule independently of a checkpoint's behavioral sign.

\paragraph{What information must survive grouping.}
Theorem~\ref{prop:ceq-universal-factorization} distinguishes invariance from faithful content retention: an invariant summary preserves the full group--content state exactly when its recoding is injective.  Proposition~\ref{prop:equal-count-separation} exhibits the information discarded by a scalar group count even with identical records and equal block sizes.  The sharper discrepancy $\Delta_q$ makes the same distinction in error propagation, charging the unmatched group--content mass.  Together these results specify what the supplied partition controls before generation.  The observed sign reversals motivate checkpoint-specific measurement of the response to that state.

\section{Conclusion}

A supplied partition specifies which records contribute together before generation.  Its group--content state preserves distinctions beyond record count and group count, while removing within-group canonical copies.  Across exact copies, complementary passages, content-fixed split/merge interventions, and controlled campaign renderings, the emitted partition changes model-facing influence.  The campaign panel exhibits checkpoint-dependent sign reversals.  The formal guarantee belongs to the external representation: each supplied group receives one bounded group-level contribution while complementary canonical content remains available within that group.  Under oracle grouping, each group corresponds to one claim-relative root; text, metadata, and authenticated provenance signals inform the estimated partition, explicit signatures preserve evidence-bearing identity, and Equation~\eqref{eq:provenance-discovery-bound} converts partition errors into a representation bound.

\subsection*{AI use statement}
None.

\subsection*{Ethics statement}
The study analyzes public, previously released datasets, including prior crowd annotations in the HUMAN domain, and introduces zero new human-subject interactions.  All analyzed data are public.  The main misuse risk is content manipulation informed by the controlled repetition protocol.  The presentation centers defensive evaluation and separates oracle source metadata from editable text labels.

\subsection*{Reproducibility statement}
The anonymous artifact combines the verified stored-output layer with the controlled GEO extension and recomputes packaged Phase 6 and Phase 8--12 results.  Its Phase 12 layer contains all 3,072 choices, predictions, grouping assignments, five paper outputs, and the independent audit.  A clean extraction verifies 284 manifest items and recomputes legacy and Phase 12 statistics with zero failures; the maximum Phase 12 numerical difference is $5.33\times10^{-15}$.  Phase 5 and Phase 7 execution counts remain in their audited run records.  Fresh forward execution uses the named public checkpoints and licensed datasets.

The accompanying analysis scripts reproduce the additional post-hoc order decomposition from these stored Phase 12 predictions and check the preservation of the original result files.

\bibliography{references}

\begin{thebibliography}{25}
\providecommand{\natexlab}[1]{#1}
\providecommand{\url}[1]{\texttt{#1}}
\expandafter\ifx\csname urlstyle\endcsname\relax
  \providecommand{\doi}[1]{doi: #1}\else
  \providecommand{\doi}{doi: \begingroup \urlstyle{rm}\Url}\fi

\bibitem[Abdin et~al.(2024)]{abdin2024phi4}
Marah Abdin et~al.
\newblock {Phi-4} technical report.
\newblock \emph{arXiv preprint arXiv:2412.08905}, 2024.
\newblock URL \url{https://arxiv.org/abs/2412.08905}.

\bibitem[Aggarwal et~al.(2024)Aggarwal, Murahari, Rajpurohit, Kalyan,
  Narasimhan, and Deshpande]{aggarwal2024geo}
Pranjal Aggarwal, Vishvak Murahari, Tanmay Rajpurohit, Ashwin Kalyan, Karthik
  Narasimhan, and Ameet Deshpande.
\newblock {GEO}: Generative engine optimization.
\newblock In \emph{Proceedings of the 30th ACM SIGKDD Conference on Knowledge
  Discovery and Data Mining}, pp.\  5--16, 2024.
\newblock \doi{10.1145/3637528.3671900}.
\newblock URL \url{https://doi.org/10.1145/3637528.3671900}.

\bibitem[Chen et~al.(2019)Chen, Khashabi, Yin, Callison-Burch, and
  Roth]{chen2019perspectrum}
Sihao Chen, Daniel Khashabi, Wenpeng Yin, Chris Callison-Burch, and Dan Roth.
\newblock Seeing things from a different angle: Discovering diverse
  perspectives about claims.
\newblock In \emph{Proceedings of the 2019 Conference of the North American
  Chapter of the Association for Computational Linguistics: Human Language
  Technologies, Volume 1 (Long and Short Papers)}, pp.\  542--557, 2019.
\newblock \doi{10.18653/v1/N19-1053}.
\newblock URL \url{https://aclanthology.org/N19-1053/}.

\bibitem[Chen et~al.(2026)Chen, Ren, Laakom, Li, Guo, and
  Schmidhuber]{chen2026searchgeo}
Yimeng Chen, Zhe Ren, Firas Laakom, Yu~Li, Dandan Guo, and J{\"u}rgen
  Schmidhuber.
\newblock How much can we trust {LLM} search agents? measuring endorsement
  vulnerability to web content manipulation.
\newblock \emph{arXiv preprint arXiv:2606.16821}, 2026.
\newblock \doi{10.48550/arXiv.2606.16821}.
\newblock URL \url{https://arxiv.org/abs/2606.16821}.

\bibitem[Chiang \& Lee(2024)Chiang and Lee]{chiang-lee-2024-metadata}
Cheng-Han Chiang and Hung-yi Lee.
\newblock Do metadata and appearance of the retrieved webpages affect {LLM}'s
  reasoning in retrieval-augmented generation?
\newblock In \emph{Proceedings of the 7th BlackboxNLP Workshop: Analyzing and
  Interpreting Neural Networks for {NLP}}, pp.\  389--406, 2024.
\newblock \doi{10.18653/v1/2024.blackboxnlp-1.24}.
\newblock URL \url{https://aclanthology.org/2024.blackboxnlp-1.24/}.

\bibitem[Ge et~al.(2025)Ge, Wu, Chin, Lee, and Cao]{ge2025resolving}
Ziyu Ge, Yuhao Wu, Daniel Wai~Kit Chin, Roy Ka-Wei Lee, and Rui Cao.
\newblock Resolving conflicting evidence in automated fact-checking: A study on
  retrieval-augmented {LLM}s.
\newblock \emph{arXiv preprint arXiv:2505.17762}, 2025.
\newblock URL \url{https://arxiv.org/abs/2505.17762}.

\bibitem[Jiang et~al.(2023)]{jiang2023mistral}
Albert~Q. Jiang et~al.
\newblock Mistral {7B}.
\newblock \emph{arXiv preprint arXiv:2310.06825}, 2023.
\newblock URL \url{https://arxiv.org/abs/2310.06825}.

\bibitem[Jin et~al.(2024)Jin, Cao, Chen, Liu, Jiang, Xu, Qiuxia, and
  Zhao]{jin-etal-2024-tug}
Zhuoran Jin, Pengfei Cao, Yubo Chen, Kang Liu, Xiaojian Jiang, Jiexin Xu,
  Li~Qiuxia, and Jun Zhao.
\newblock Tug-of-war between knowledge: Exploring and resolving knowledge
  conflicts in retrieval-augmented language models.
\newblock In \emph{Proceedings of the 2024 Joint International Conference on
  Computational Linguistics, Language Resources and Evaluation ({LREC}-{COLING}
  2024)}, pp.\  16867--16878, 2024.
\newblock URL \url{https://aclanthology.org/2024.lrec-main.1466/}.

\bibitem[Li et~al.(2025)Li, Guo, Gao, Chen, Zhao, Zhang, Liu, Wu, Yao, and
  Wei]{li-etal-2025-llms-trust}
Yuxuan Li, Xinwei Guo, Jiashi Gao, Guanhua Chen, Xiangyu Zhao, Jiaxin Zhang,
  Quanying Liu, Haiyan Wu, Xin Yao, and Xuetao Wei.
\newblock {LLM}s trust humans more, that's a problem! unveiling and mitigating
  the authority bias in retrieval-augmented generation.
\newblock In \emph{Proceedings of the 63rd Annual Meeting of the Association
  for Computational Linguistics (Volume 1: Long Papers)}, pp.\  28844--28858,
  2025.
\newblock \doi{10.18653/v1/2025.acl-long.1400}.
\newblock URL \url{https://aclanthology.org/2025.acl-long.1400/}.

\bibitem[Liao(2026)]{liao2026auditing}
Junchi Liao.
\newblock Auditing provenance sensitivity in {LLM} agent action selection.
\newblock \emph{arXiv preprint arXiv:2607.20827}, 2026.
\newblock URL \url{https://arxiv.org/abs/2607.20827}.

\bibitem[Longpre et~al.(2021)Longpre, Perisetla, Chen, Ramesh, DuBois, and
  Singh]{longpre-etal-2021-entity}
Shayne Longpre, Kartik Perisetla, Anthony Chen, Nikhil Ramesh, Chris DuBois,
  and Sameer Singh.
\newblock Entity-based knowledge conflicts in question answering.
\newblock In \emph{Proceedings of the 2021 Conference on Empirical Methods in
  Natural Language Processing}, pp.\  7052--7063, 2021.
\newblock \doi{10.18653/v1/2021.emnlp-main.565}.
\newblock URL \url{https://aclanthology.org/2021.emnlp-main.565/}.

\bibitem[{Microsoft}(2025)]{microsoft2025phi4mini}
{Microsoft}.
\newblock {Phi-4-Mini} technical report: Compact yet powerful multimodal
  language models via mixture-of-{LoRA}s.
\newblock \emph{arXiv preprint arXiv:2503.01743}, 2025.
\newblock URL \url{https://arxiv.org/abs/2503.01743}.

\bibitem[Naphade(2026)]{naphade-2026-rational}
Atharv Naphade.
\newblock Rational synthesizers or heuristic followers? analyzing {LLM}s in
  {RAG}-based question-answering.
\newblock In \emph{Findings of the Association for Computational Linguistics:
  ACL 2026}, pp.\  40293--40311, 2026.
\newblock \doi{10.18653/v1/2026.findings-acl.2003}.
\newblock URL \url{https://aclanthology.org/2026.findings-acl.2003/}.

\bibitem[Rahadi(2026)]{rahadi2026counting}
Irwan Rahadi.
\newblock Counting copies as evidence: Confidence inflation from dependent
  evidence in retrieval-augmented generation ({RAG}), August 2026.
\newblock URL \url{https://doi.org/10.5281/zenodo.21923648}.
\newblock Position paper and preprint.

\bibitem[Reimers \& Gurevych(2019)Reimers and
  Gurevych]{reimers-gurevych-2019-sentence}
Nils Reimers and Iryna Gurevych.
\newblock Sentence-{BERT}: Sentence embeddings using {S}iamese {BERT}-networks.
\newblock In \emph{Proceedings of the 2019 Conference on Empirical Methods in
  Natural Language Processing and the 9th International Joint Conference on
  Natural Language Processing}, pp.\  3982--3992. Association for Computational
  Linguistics, 2019.
\newblock \doi{10.18653/v1/D19-1410}.
\newblock URL \url{https://aclanthology.org/D19-1410/}.

\bibitem[Ross et~al.(2026)Ross, Koopman, van~der Vegt, and
  Zuccon]{ross2026redundancy}
Jonathan~J. Ross, Bevan Koopman, Anton van~der Vegt, and Guido Zuccon.
\newblock How retriever redundancy and diversity impact {RAG} effectiveness.
\newblock \emph{arXiv preprint arXiv:2608.13956}, 2026.
\newblock URL \url{https://arxiv.org/abs/2608.13956}.

\bibitem[Schuster et~al.(2026)Schuster, Gautam, and Markert]{schuster2026whose}
Jakob Schuster, Vagrant Gautam, and Katja Markert.
\newblock Whose facts win? {LLM} source preferences under knowledge conflicts.
\newblock In \emph{Proceedings of the 64th Annual Meeting of the Association
  for Computational Linguistics (Volume 1: Long Papers)}, pp.\  29430--29459,
  2026.
\newblock \doi{10.18653/v1/2026.acl-long.1357}.
\newblock URL \url{https://aclanthology.org/2026.acl-long.1357/}.

\bibitem[Song et~al.(2026)Song, Yin, Hu, and Wang]{song2026aggregation}
Dachuan Song, Junyu Yin, Zechen Hu, and Xuan Wang.
\newblock Mergeable model-side aggregation states for long-context language
  models.
\newblock \emph{arXiv preprint arXiv:2607.26448}, 2026.
\newblock URL \url{https://arxiv.org/abs/2607.26448}.

\bibitem[Sun et~al.(2026)Sun, Bai, and Dredze]{sun-etal-2026-task}
Kaiser Sun, Fan Bai, and Mark Dredze.
\newblock Task matters: Knowledge requirements shape {LLM} responses to
  context--memory conflict.
\newblock In \emph{Findings of the Association for Computational Linguistics:
  ACL 2026}, pp.\  4154--4176, 2026.
\newblock \doi{10.18653/v1/2026.findings-acl.202}.
\newblock URL \url{https://aclanthology.org/2026.findings-acl.202/}.

\bibitem[Wallace et~al.(2024)Wallace, Xiao, Leike, Weng, Heidecke, and
  Beutel]{wallace2024instruction}
Eric Wallace, Kai Xiao, Reimar Leike, Lilian Weng, Johannes Heidecke, and Alex
  Beutel.
\newblock The instruction hierarchy: Training {LLM}s to prioritize privileged
  instructions.
\newblock \emph{arXiv preprint arXiv:2404.13208}, 2024.
\newblock URL \url{https://arxiv.org/abs/2404.13208}.

\bibitem[Wan et~al.(2024)Wan, Wallace, and Klein]{wan-etal-2024-evidence}
Alexander Wan, Eric Wallace, and Dan Klein.
\newblock What evidence do language models find convincing?
\newblock In \emph{Proceedings of the 62nd Annual Meeting of the Association
  for Computational Linguistics (Volume 1: Long Papers)}, pp.\  7468--7484,
  2024.
\newblock \doi{10.18653/v1/2024.acl-long.403}.
\newblock URL \url{https://aclanthology.org/2024.acl-long.403/}.

\bibitem[Yang et~al.(2025)]{yang2025qwen3}
An~Yang et~al.
\newblock {Qwen3} technical report.
\newblock \emph{arXiv preprint arXiv:2505.09388}, 2025.
\newblock URL \url{https://arxiv.org/abs/2505.09388}.

\bibitem[Zhan et~al.(2026)Zhan, Zhou, Li, Jing, Li, and
  Haddadi]{zhan-etal-2026-adversarial}
Zhonghao Zhan, Huichi Zhou, Zhenhao Li, Peiyuan Jing, Krinos Li, and Hamed
  Haddadi.
\newblock How adversarial environments mislead agentic {AI}?
\newblock In \emph{Findings of the Association for Computational Linguistics:
  ACL 2026}, pp.\  10264--10280, 2026.
\newblock \doi{10.18653/v1/2026.findings-acl.499}.
\newblock URL \url{https://aclanthology.org/2026.findings-acl.499/}.

\bibitem[Zhou et~al.(2024)Zhou, Li, Meng, Jiao, Ji, and
  Han]{zhou2024establishing}
Sizhe Zhou, Sha Li, Yu~Meng, Yizhu Jiao, Heng Ji, and Jiawei Han.
\newblock Establishing knowledge preference in language models.
\newblock \emph{arXiv preprint arXiv:2407.13048}, 2024.
\newblock URL \url{https://arxiv.org/abs/2407.13048}.

\bibitem[Zhou et~al.(2023)Zhou, Zhang, Poon, and Chen]{zhou-etal-2023-context}
Wenxuan Zhou, Sheng Zhang, Hoifung Poon, and Muhao Chen.
\newblock Context-faithful prompting for large language models.
\newblock In \emph{Findings of the Association for Computational Linguistics:
  EMNLP 2023}, pp.\  14544--14556, 2023.
\newblock \doi{10.18653/v1/2023.findings-emnlp.968}.
\newblock URL \url{https://aclanthology.org/2023.findings-emnlp.968/}.

\end{thebibliography}
\bibliographystyle{iclr2027_conference}

\newpage
\appendix
\raggedbottom
\section{Supporting Two-Stage Audit Theory}
\label{app:proofs}

\subsection{Finite-State Two-Stage Identity}

For item $i$, let balanced direction $S_i\in\{-1,+1\}$ determine which abstract value wins the supplied-group count.  Let $Z_i^{\mathrm{parse}}\in\mathcal Z$ be an explicit parse-stage output and $\mu_i(z)$ the mean response of a separate use-stage call when a program supplies state $z$.  Define
\[
\pi_i(z)=\frac{1}{2}\left[
\Pr(Z_i^{\mathrm{parse}}=z\mid S_i=+1,i)
-\Pr(Z_i^{\mathrm{parse}}=z\mid S_i=-1,i)
\right].
\]

\begin{theorem}[Finite-state two-stage identity]
\label{thm:two-stage-identity}
If the executed two-stage program satisfies
$\mathbb E[Y_i^{\mathrm{cas}}\mid Z_i^{\mathrm{parse}}=z,S_i=s,i]=\mu_i(z)$,
then its half contrast is
\begin{equation}
\begin{aligned}
r_i^{\mathrm{cas}}
&=\frac{\mathbb E[Y_i^{\mathrm{cas}}\mid S_i=+1,i]
-\mathbb E[Y_i^{\mathrm{cas}}\mid S_i=-1,i]}{2}\\
&=\sum_{z\in\mathcal Z}\mu_i(z)\pi_i(z).
\end{aligned}
\label{eq:two-stage-identity}
\end{equation}
\end{theorem}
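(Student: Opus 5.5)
The identity follows from the law of total expectation, applied separately in each balanced direction, combined with the use-stage assumption. I would proceed in three steps.

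\emph{Step 1.} Fix $i$ and $s\in\{-1,+1\}$. Because $\mathcal Z$ is finite, conditioning on the parse-stage output gives
\[
\mathbb E[Y_i^{\mathrm{cas}}\mid S_i=s,i]=\sum_{z\in\mathcal Z}\mathbb E[Y_i^{\mathrm{cas}}\mid Z_i^{\mathrm{parse}}=z,S_i=s,i]\,\Pr(Z_i^{\mathrm{parse}}=z\mid S_i=s,i).
\]
The sum ranges over $z$ with positive conditional probability. Terms with zero probability contribute nothing however the undefined conditional expectation is set, so we may use any version of it, in particular $\mu_i(z)$.

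\emph{Step 2.} Substitute the hypothesis $\mathbb E[Y_i^{\mathrm{cas}}\mid Z_i^{\mathrm{parse}}=z,S_i=s,i]=\mu_i(z)$. The inner conditional expectation no longer depends on $s$. This is the one substantive step: the use-stage response sees only the supplied state $z$, not the direction. It yields $\mathbb E[Y_i^{\mathrm{cas}}\mid S_i=s,i]=\sum_z\mu_i(z)\Pr(Z_i^{\mathrm{parse}}=z\mid S_i=s,i)$.

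\emph{Step 3.} Take the $s=+1$ expression minus the $s=-1$ expression and divide by two. Since both are finite sums with the common coefficient $\mu_i(z)$, linearity gives $\sum_z\mu_i(z)$ times half the difference of the two conditional probabilities. That half-difference is exactly $\pi_i(z)$, which proves Equation~\eqref{eq:two-stage-identity}.

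The only real obstacle is bookkeeping about well-definedness. We need $\Pr(S_i=s\mid i)>0$ for both directions, which holds by the balanced design. We also need $\mu_i$ to be finite, so $Y^{\mathrm{cas}}$ should be integrable (bounded, in fact, for probability or choice outcomes). Zero-probability parse states are handled by the convention above. No earlier result of the paper is needed.
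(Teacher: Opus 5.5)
Your proposal is correct and matches the paper's proof: the law of total expectation in each direction $s\in\{-1,+1\}$, then the two-stage condition to make the coefficient $\mu_i(z)$ direction-free, then subtracting the two directions and halving. Your remarks on zero-probability parse states, positive direction probabilities, and integrability add care that the paper leaves implicit, but they do not change the argument.
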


\begin{proof}[Proof of Theorem~\ref{thm:two-stage-identity}]
For each $s\in\{-1,+1\}$, the law of total expectation and the explicit two-stage condition give
\[
\mathbb E[Y_i^{\mathrm{cas}}\mid S_i=s,i]
=\sum_{z\in\mathcal Z}\mu_i(z)
\Pr(Z_i^{\mathrm{parse}}=z\mid S_i=s,i).
\]
Subtracting the two directions and dividing by two gives Equation~\eqref{eq:two-stage-identity}.  The derivation accommodates dependent stages and direction-asymmetric errors.
\end{proof}

For a binary parse state, any use response has the form $\mu_i(z)=a_i+u_i z$.  If
\[
p_i=\frac{\mathbb E[Z_i^{\mathrm{parse}}\mid S_i=+1,i]
-\mathbb E[Z_i^{\mathrm{parse}}\mid S_i=-1,i]}{2},
\]
then $r_i^{\mathrm{cas}}=p_i u_i$.  Across items,
$\mathbb E[p_i u_i]=\mathbb E[p_i]\mathbb E[u_i]+\operatorname{Cov}(p_i,u_i)$.
Exact reconstruction from pooled parse and use means therefore requires control of the item-level covariance.

\subsection{Two-World Transport Lower Bound}

\begin{proposition}[Prompt-transport lower bound]
\label{prop:no-free-transport}
For responses bounded in $[-L,L]$ under arbitrary relationships between auxiliary and end-to-end prompts, two end-to-end worlds can have identical auxiliary distributions while their end-to-end effects are opposite.  Every predictor determined by the auxiliary distribution then has worst-case absolute error at least $L$.
\end{proposition}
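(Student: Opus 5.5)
The argument is a two-world construction followed by the standard Le Cam-style averaging bound. I will fix one auxiliary law $\mathcal A$, for example a point mass on a single auxiliary response profile, and build two end-to-end worlds $w_+$ and $w_-$ that both induce $\mathcal A$. Since the statement allows arbitrary relationships between auxiliary and end-to-end prompts, the end-to-end responses may be specified independently of the auxiliary ones.

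In $w_+$ I will set the end-to-end response in direction $S=+1$ to $+L$ and in direction $S=-1$ to $-L$. World $w_-$ swaps these values. Under the half-contrast convention of Theorem~\ref{thm:two-stage-identity}, the effect is then $r(w_+)=\tfrac12(L-(-L))=L$ and $r(w_-)=-L$. These are opposite effects, and all responses stay in $[-L,L]$.

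If the intended effect is the unhalved difference, I would instead use responses $\pm L/2$ and obtain effects $\pm L$. Either way the two effects are separated by exactly $2L$. Checking that the paper's effect scale permits separation $2L$ is the one point I would confirm first.

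Now take any predictor $\hat r$ that depends only on the auxiliary distribution. It must output the same value $c=\hat r(\mathcal A)$ in both worlds. The triangle inequality gives
\[
|c-L|+|c+L|\ge 2L,
\]
so $\max\{|c-r(w_+)|,\,|c-r(w_-)|\}\ge L$. Hence the worst-case absolute error over these two worlds is at least $L$.

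If the predictor is randomized but still measurable with respect to the auxiliary law, the same inequality holds in expectation, because its output distribution is also identical across the two worlds. The constant predictor $c=0$ attains error exactly $L$, so the bound is sharp.

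The main obstacle is not the inequality but making the world construction legitimate. I need to check that the admissible class really places no coupling constraint linking auxiliary-stage responses to end-to-end responses. That is the hypothesis "arbitrary relationships", and it is exactly where the conditional-mean condition of Theorem~\ref{thm:two-stage-identity} is dropped. I would state explicitly that a world is a pair consisting of an auxiliary law and an end-to-end response law. Under that definition, $w_\pm$ are admissible by construction.
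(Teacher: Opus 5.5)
Your proposal is correct and matches the paper's proof. The paper also fixes the auxiliary distribution and takes two worlds with identical auxiliary observations and end-to-end responses $Y^{\mathrm{e2e}}=\pm LS$, which give effects $\pm L$, then concludes from $2L\le|\widehat R-L|+|\widehat R+L|$. The scale question you flag resolves in your favor: on these worlds both the half contrast and the paper's $R_{\mathrm{e2e}}=\mathbb E[S_iY_i^{\mathrm{e2e}}]$ equal exactly $\pm L$, so no rescaling to $\pm L/2$ is needed.
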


\begin{proof}[Proof of Proposition~\ref{prop:no-free-transport}]
Fix any auxiliary-task distribution and any predictor $\widehat R$ measurable with respect to it.  Construct two compatible end-to-end worlds with the same auxiliary observations:
\[
\mathcal W_+:Y^{\mathrm{e2e}}=LS,
\qquad
\mathcal W_-:Y^{\mathrm{e2e}}=-LS.
\]
Their end-to-end effects are $+L$ and $-L$.  The predictor has the same value in both worlds, while
\[
2L\leq |\widehat R-L|+|\widehat R+L|.
\]
At least one error is at least $L$.
\end{proof}

\subsection{Common-State Three-Defect Bound}

Let $Z^{\mathrm{e2e}}$ be a reference end-to-end state on the same finite state space, let $h_i(z)$ be its reference downstream response, and define
\[
R_{\mathrm{e2e}}=\mathbb E[S_iY_i^{\mathrm{e2e}}],
\qquad
R_{\mathrm{cas}}=\mathbb E[S_i\mu_i(Z_i^{\mathrm{parse}})].
\]
Introduce
\begin{align*}
\eta_{\mathrm{dir}}
&=\left|\mathbb E[S_i\{Y_i^{\mathrm{e2e}}-h_i(Z_i^{\mathrm{e2e}})\}]\right|,\\
\eta_{\mathrm{use}}
&=\mathbb E\left[|h_i(Z_i^{\mathrm{e2e}})-\mu_i(Z_i^{\mathrm{e2e}})|\right],\\
\eta_{\mathrm{parse}}
&=\mathbb E\left[|\mu_i(Z_i^{\mathrm{e2e}})-\mu_i(Z_i^{\mathrm{parse}})|\right].
\end{align*}
Adding and subtracting the two intermediate responses and applying the triangle inequality gives
\[
|R_{\mathrm{e2e}}-R_{\mathrm{cas}}|
\leq \eta_{\mathrm{dir}}+\eta_{\mathrm{use}}+\eta_{\mathrm{parse}}.
\]
On a common reference state space, the transport gap decomposes into the three displayed defects.  If direct mismatch is zero, use mismatch is uniformly at most $\epsilon_u$, and the parse states disagree with probability at most $\epsilon_p$, with responses in $[-L,L]$, then
\[
|R_{\mathrm{e2e}}-R_{\mathrm{cas}}|
\leq \epsilon_u+2L\epsilon_p.
\]
\section{Canonical Evidence Quotient and External Grouping Guarantees}
\label{app:grouping-guarantees}
\label{app:ceq-proofs}

\subsection{Text-Only Provenance Recovery Boundary}
\label{app:text-only-proof}

\begin{proposition}[Exact boundary for deterministic text-only provenance recovery]
\label{prop:text-only-provenance}
Fix a claim $c$ and an integer $n\geq2$.  Let $\Pi_n$ be the set of partitions of $[n]$ and let $\mathcal W_{c,n}$ be the class of admissible worlds with $n$ observed records.  Let $Z_c:\mathcal W_{c,n}\to\mathcal Z^n$ map each world to its complete model-visible text observation, and let $P_c^\star:\mathcal W_{c,n}\to\Pi_n$ map each world to its oracle claim-relative provenance partition.  There exists a deterministic text-only rule $\Gamma_c:\mathcal Z^n\to\Pi_n$ satisfying $\Gamma_c(Z_c(w))=P_c^\star(w)$ for every $w\in\mathcal W_{c,n}$ if and only if, for all $w_0,w_1\in\mathcal W_{c,n}$,
\[
Z_c(w_0)=Z_c(w_1)\quad\Longrightarrow\quad P_c^\star(w_0)=P_c^\star(w_1).
\]
\end{proposition}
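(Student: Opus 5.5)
The plan is to prove both directions directly, treating this as the standard factorization of a function through another function on its image.

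\emph{Necessity.} Suppose a rule $\Gamma_c$ with $\Gamma_c(Z_c(w))=P_c^\star(w)$ for all admissible $w$ exists. Take $w_0,w_1$ with $Z_c(w_0)=Z_c(w_1)$. Since $\Gamma_c$ is a function, it returns the same value on the common text:
\[
P_c^\star(w_0)=\Gamma_c(Z_c(w_0))=\Gamma_c(Z_c(w_1))=P_c^\star(w_1).
\]
So the partition is constant on every observational fiber $Z_c^{-1}(z)$.

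\emph{Sufficiency.} Assume fiber constancy and build $\Gamma_c$ in two pieces.
\begin{itemize}
\item On the image $Z_c(\mathcal W_{c,n})$: for each $z$ in the image, choose any $w$ with $Z_c(w)=z$ and set $\Gamma_c(z)=P_c^\star(w)$. By the hypothesis, every $w$ in the fiber gives the same partition, so this value does not depend on the choice.
\item Off the image: set $\Gamma_c(z)$ to a fixed default partition, say the all-singletons partition of $[n]$. This is possible because $\Pi_n$ is nonempty.
\end{itemize}
Then $\Gamma_c(Z_c(w))=P_c^\star(w)$ holds for every admissible $w$ by construction, and $\Gamma_c$ is a deterministic function of the text alone.

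The step needing most care is well-definedness in the sufficiency direction. It is exactly where fiber constancy is used. Picking a representative from each fiber uses choice in general, but we can avoid this: define $\Gamma_c(z)$ as the unique element of the set $\{P_c^\star(w):Z_c(w)=z\}$, which is a singleton by the hypothesis.

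It is also worth noting that the hypotheses $n\ge 2$ and that the admissible class is arbitrary are not needed for the equivalence itself. When $n\ge2$ the condition can fail: two worlds with identical text could carry the one-block and two-block partitions. In that case exact universal recovery is impossible, which matches the paper's reading that text-similarity methods need distributional assumptions.
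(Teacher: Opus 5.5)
Your proof is correct and follows the paper's argument: necessity comes from $\Gamma_c$ being a function, and sufficiency defines $\Gamma_c(z)$ as the unique partition attained on the fiber $Z_c^{-1}(\{z\})$, then extends it arbitrarily off the image using $\Pi_n\neq\varnothing$. Your side remark that $n\geq2$ plays no role in the equivalence itself is also accurate.
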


\begin{proof}[Proof of Proposition~\ref{prop:text-only-provenance}]
For necessity, suppose a universally exact rule $\Gamma_c$ exists.  If $Z_c(w_0)=Z_c(w_1)$, then functionality gives $\Gamma_c(Z_c(w_0))=\Gamma_c(Z_c(w_1))$, while exactness identifies the two sides with $P_c^\star(w_0)$ and $P_c^\star(w_1)$.  Hence $P_c^\star(w_0)=P_c^\star(w_1)$.

For sufficiency, suppose $P_c^\star$ is constant on each observational fiber of $Z_c$.  For every $z\in Z_c(\mathcal W_{c,n})$, define $\Gamma_c(z)$ as the unique oracle-partition value attained on the fiber $Z_c^{-1}(\{z\})$.  Fiber constancy makes this definition well-defined.  Since $\Pi_n$ is nonempty, extend $\Gamma_c$ arbitrarily outside $Z_c(\mathcal W_{c,n})$.  Then $\Gamma_c(Z_c(w))=P_c^\star(w)$ for every $w\in\mathcal W_{c,n}$.
\end{proof}

\subsection{Operational Quotient and Factorization}

Let $\mathsf{Cont}=\operatorname{Fin}^+(\mathcal Z_{\rm ev})$ be the nonempty finite canonical-content sets and let
\[
\mathsf{Quot}=\mathsf M_{\rm fin}(\mathsf{Cont})
=\{m:\mathsf{Cont}\to\mathbb N_0:\operatorname{supp}(m)\text{ is finite}\}.
\]
For $x=((k_i,z_i))_{i=1}^n\in\mathsf{Cfg}$, write $K_x$ for its occupied labels and $V_x(k)=\{z_i:k_i=k\}$.  Equation~\eqref{eq:ceq-map} maps $x$ to the outer multiset of these inner sets.  Label equality encodes membership; label spelling is removed by the quotient.

Identity-sensitive representations use an explicit authenticated signature coordinate.  Let $\Sigma$ be a nonempty space of evidence-bearing group signatures and define
\[
\mathsf{Cfg}_{\Sigma}=\{(x,s_x):x\in\mathsf{Cfg},\ s_x:K_x\to\Sigma\},
\qquad
\mathsf{Quot}_{\Sigma}=\mathsf M_{\rm fin}(\Sigma\times\mathsf{Cont}),
\]
\begin{equation}
q_{\rm ev}^{\Sigma}(x,s_x)
=\sum_{k\in K_x}\delta_{(s_x(k),V_x(k))}.
\label{eq:ceq-signature-map}
\end{equation}
Declare $(x,s_x)\sim_{\Sigma}(y,s_y)$ exactly when $q_{\rm ev}^{\Sigma}(x,s_x)=q_{\rm ev}^{\Sigma}(y,s_y)$.  A bijective label renaming $\beta:K_x\to K'$ sends $((k_i,z_i)_i,s_x)$ to $((\beta(k_i),z_i)_i,s')$, where $s'(\beta(k))=s_x(k)$.  Thus label spelling is removed while each signature--content pair is retained.  Because $\mathcal K$ is infinite, $q_{\rm ev}^{\Sigma}$ is surjective.  Consequently, for every $\Psi_{\Sigma}:\mathsf{Cfg}_{\Sigma}\to Y$, invariance on $\sim_{\Sigma}$ classes is equivalent to a unique factorization $\Psi_{\Sigma}=\bar\Psi_{\Sigma}\circ q_{\rm ev}^{\Sigma}$.  The sufficiency, faithfulness, and prompt-visible statements below have the same typed analogues.  For $a:\Sigma\times\mathsf{Cont}\to\mathcal X$, the additive analogue is
\[
E_a^{\Sigma}(x,s_x)=\sum_{k\in K_x}a(s_x(k),V_x(k)),
\qquad
\Phi_a^{\Sigma}(m)=\sum_{(\sigma,A)}m(\sigma,A)a(\sigma,A),
\]
and its faithfulness criterion is finite integer-linear independence over $(\sigma,A)\in\Sigma\times\mathsf{Cont}$.

\begin{proposition}[Operational quotient]
\label{prop:ceq-operational}
The map $q_{\rm ev}:\mathsf{Cfg}\to\mathsf{Quot}$ is surjective.  Equality of its outputs is exactly the equivalence relation generated by record permutation, bijective renaming of occupied opaque labels, and insertion or deletion of a within-group canonical duplicate while one occurrence remains.  A nontrivial split of one occupied group into $r\geq2$ nonempty groups, or a nontrivial merge of $r\geq2$ occupied groups, changes the quotient state even when the union of visible canonical content is unchanged.
\end{proposition}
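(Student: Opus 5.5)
The proof has three parts: surjectivity, identification of the kernel of $q_{\rm ev}$ with the generated equivalence, and the split/merge separation.

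\emph{Surjectivity.} Given $m\in\mathsf{Quot}$, list its support with multiplicity as nonempty finite sets $A_1,\dots,A_N$. Because $\mathcal K$ is infinite, we can choose distinct labels $k_1,\dots,k_N$. For each $j$, enumerate the elements of $A_j$ as records $(k_j,z)$ for $z\in A_j$, and concatenate these blocks. Then $K_x=\{k_1,\dots,k_N\}$ and $V_x(k_j)=A_j$, so $q_{\rm ev}(x)=\sum_j\delta_{A_j}=m$.

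\emph{Kernel equals the generated relation.} Let $\approx$ denote the equivalence relation generated by the three operations.

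The inclusion ${\approx}\subseteq{\sim}$ follows by checking each generator.
\begin{itemize}
\item Permutation leaves every $V_x(k)$ unchanged.
\item A bijective renaming $\beta$ gives $V_{\beta x}(\beta k)=V_x(k)$, so it merely reindexes the outer sum.
\item Inserting or deleting a duplicate $(k,z)$ while $z$ still occurs under $k$ leaves the set $V_x(k)$ unchanged, and leaves $K_x$ unchanged because the label stays occupied.
\end{itemize}

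For the converse, we define a normal form. Given $x$, first delete within-group duplicates until each label carries every element of $V_x(k)$ exactly once. Then permute the records so that they are grouped by label, with groups ordered by a fixed total order on the finitely many content sets present and elements ordered within each group. Ties among equal content sets may be ordered arbitrarily.

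Now suppose $q_{\rm ev}(x)=q_{\rm ev}(y)$. Equal outer multisets give a bijection $\beta:K_x\to K_y$ with $V_y(\beta k)=V_x(k)$, obtained by matching the groups of each content set in any order. Renaming $x$ by $\beta$ and then normalizing both configurations yields identical sequences, up to a permutation that handles ties between equal content sets. Hence $x\approx y$.

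The main obstacle is bookkeeping rather than mathematics, and it has two parts. First, the tie-breaking among groups with equal content sets must be resolved by a permutation, which is legitimate because we fix an explicit bijection first and then order the records by $y$'s labels. Second, the renaming must be bijective only on the occupied labels, with image in $\mathcal K$. This is automatic here, since the target labels are exactly $K_y$.

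\emph{Split/merge separation.} Suppose an occupied group $k$ with content $A=V_x(k)$ is split into $r\ge2$ nonempty groups with contents $A_1,\dots,A_r$, where $\bigcup_j A_j=A$. Every other group is untouched. The new quotient is $q-\delta_A+\sum_j\delta_{A_j}$, and its total mass $\sum_S m(S)$, which counts groups, increases by $r-1\ge1$. So the state changes even when the visible content is the same. A merge of $r\ge2$ groups is the reverse operation, reducing the total mass by $r-1$. In both cases the content union plays no role, because total mass is an invariant of the quotient that distinguishes the two states.
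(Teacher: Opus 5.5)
Your proof is correct and follows the paper's argument essentially step for step. You get surjectivity by assigning distinct labels to the atoms of $m$, the converse inclusion from the label bijection induced by equal multisets followed by renaming, duplicate deletion, and permutation (your normal-form bookkeeping just spells out the paper's ``the reduced configurations coincide''), and split/merge separation from the total group mass $\sum_A m(A)$ changing by $\pm(r-1)$.
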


\begin{proof}
Each declared nuisance transformation leaves the outer multiset of inner content sets unchanged.  Conversely, if $q_{\rm ev}(x)=q_{\rm ev}(y)$, equality of finite multisets gives a bijection between occupied labels whose matched groups have identical content sets.  Rename labels by that bijection, delete duplicate occurrences inside each matched group, and permute the remaining records; the reduced configurations coincide.  Reversing duplicate deletions recovers the originals, so the stated moves generate the whole equivalence relation.

For surjectivity, write any $m\in\mathsf{Quot}$ as $m=\sum_{j=1}^r\delta_{A_j}$.  Choose $r$ distinct labels and list one pair $(k_j,z)$ for each $z\in A_j$; the resulting configuration maps to $m$.  The empty configuration maps to the empty multiset.  Finally define $N(m)=\sum_A m(A)$.  A split changes $N$ by $r-1$ and a merge by $1-r$; both operations therefore change the quotient state.
\end{proof}

\begin{proof}[Proof of Theorem~\ref{prop:ceq-universal-factorization}]
Proposition~\ref{prop:ceq-operational} identifies the three-operation invariance with constancy on quotient fibers.  If $\Psi=\bar\Psi\circ q_{\rm ev}$ and $x\sim y$, equality of the quotient states gives $\Psi(x)=\Psi(y)$.  Conversely, suppose $\Psi$ is invariant.  For $m\in\mathsf{Quot}$ choose any $x_m$ with $q_{\rm ev}(x_m)=m$ and set $\bar\Psi(m)=\Psi(x_m)$.  Invariance makes this definition independent of the representative, and $\Psi=\bar\Psi\circ q_{\rm ev}$.  Surjectivity of $q_{\rm ev}$ makes $\bar\Psi$ unique.

If $\Psi$ is invariant, every composition $h\circ\Psi$ is invariant.  Conversely, quantifying over all downstream maps includes the identity map on $Y$; equivalently, binary maps can separate any two unequal points of $Y$.  For a specified downstream family, point separation on $\Psi(\mathsf{Cfg})$ supplies the converse.
The final faithfulness and recoverability equivalences follow from Proposition~\ref{prop:ceq-minimal-sufficiency} below, whose proof uses only the factorization just established.
\end{proof}

\subsection{Faithfulness, Minimal Sufficiency, and Additive Realizations}

An enforcing representation $\Theta:\mathsf{Cfg}\to W$ is \emph{quotient-faithful} when $\Theta(x)=\Theta(y)$ if and only if $x\sim y$.  It is \emph{universally sufficient} when, for every set $D$, every relation-invariant observable $f:\mathsf{Cfg}\to D$ factors through $\Theta$.  Together, quotient faithfulness and universal sufficiency characterize exact preservation of relation-invariant information.

\begin{proposition}[Minimal sufficiency and faithful recoding]
\label{prop:ceq-minimal-sufficiency}
The quotient $q_{\rm ev}$ is universally sufficient.  If $\Theta$ is universally sufficient, then $q_{\rm ev}=\rho\circ\Theta$ for some $\rho:W\to\mathsf{Quot}$.  If $\Theta=\bar\Theta\circ q_{\rm ev}$ is enforcing, then the following are equivalent: $\Theta$ is universally sufficient; $\Theta$ is quotient-faithful; and $\bar\Theta$ is injective on $\mathsf{Quot}$.
\end{proposition}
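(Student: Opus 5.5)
The whole argument reduces to two facts already in hand: $q_{\rm ev}$ is surjective and is itself an invariant observable (Proposition~\ref{prop:ceq-operational}), and every invariant map factors uniquely through it (Theorem~\ref{prop:ceq-universal-factorization}). I will prove the three claims in order: (a) $q_{\rm ev}$ is universally sufficient; (b) every universally sufficient $\Theta$ refines $q_{\rm ev}$; (c) for an enforcing $\Theta=\bar\Theta\circ q_{\rm ev}$, universal sufficiency, quotient faithfulness and injectivity of $\bar\Theta$ are equivalent.

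For (a), take any set $D$ and any relation-invariant $f:\mathsf{Cfg}\to D$. The factorization part of Theorem~\ref{prop:ceq-universal-factorization} (its (ii)$\Rightarrow$(iii) direction, whose proof does not use the faithfulness clause) gives $f=\bar f\circ q_{\rm ev}$. This is exactly universal sufficiency of $q_{\rm ev}$.

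For (b), apply universal sufficiency of $\Theta$ to the particular observable $f=q_{\rm ev}$ with $D=\mathsf{Quot}$. This observable is relation-invariant by the definition of $\sim$. Factoring it through $\Theta$ produces $\rho:W\to\mathsf{Quot}$ with $q_{\rm ev}=\rho\circ\Theta$.

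For (c), I will prove the equivalences as a cycle, taking care to use equalities only on $\Theta(\mathsf{Cfg})$ and not on all of $W$.
\begin{itemize}
\item \emph{Sufficient $\Rightarrow$ faithful.} By (b), $\Theta(x)=\Theta(y)$ gives $q_{\rm ev}(x)=\rho\Theta(x)=\rho\Theta(y)=q_{\rm ev}(y)$, so $x\sim y$. The reverse implication is just enforcement.
\item \emph{Faithful $\Rightarrow$ $\bar\Theta$ injective.} Given $\bar\Theta(m)=\bar\Theta(m')$, use surjectivity of $q_{\rm ev}$ to pick $x,y$ with $q_{\rm ev}(x)=m$ and $q_{\rm ev}(y)=m'$. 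Then $\Theta(x)=\Theta(y)$, faithfulness gives $x\sim y$, and so $m=m'$.
\item \emph{$\bar\Theta$ injective $\Rightarrow$ sufficient.} Injectivity gives a left inverse $\rho$ on the image $\bar\Theta(\mathsf{Quot})$, extended arbitrarily to the rest of $W$ since $\mathsf{Quot}$ is nonempty. Then $\rho\circ\Theta=q_{\rm ev}$. Any invariant $f$ factors as $\bar f\circ q_{\rm ev}$ by (a), hence $f=(\bar f\circ\rho)\circ\Theta$.
\end{itemize}

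There is no real obstacle here. The only points needing care are the reliance on surjectivity of $q_{\rm ev}$ and defining $\rho$ outside the image of $\Theta$, which is harmless because $\mathsf{Quot}$ is nonempty.
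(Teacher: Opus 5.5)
Your proposal is correct and follows essentially the same route as the paper's proof: sufficiency of $q_{\rm ev}$ from the factorization theorem, applying sufficiency of $\Theta$ to $f=q_{\rm ev}$ to get $\rho$, choosing representatives via surjectivity to get injectivity of $\bar\Theta$, and using a left inverse of $\bar\Theta$ on the reachable image for the converse. One small difference: you extend $\rho$ arbitrarily into the nonempty $\mathsf{Quot}$, whereas the paper extends the final factor into $D$, and your version is slightly cleaner.
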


\begin{proof}
Theorem~\ref{prop:ceq-universal-factorization} factors every invariant $f$ through $q_{\rm ev}$, proving its universal sufficiency.  If $\Theta$ is universally sufficient, apply its definition to the invariant observable $f=q_{\rm ev}$ to obtain $q_{\rm ev}=\rho\circ\Theta$.  Hence $\Theta(x)=\Theta(y)$ implies $q_{\rm ev}(x)=q_{\rm ev}(y)$.  Enforcement supplies the reverse implication, proving faithfulness.

For an enforcing $\Theta=\bar\Theta\circ q_{\rm ev}$, faithfulness implies injectivity of $\bar\Theta$ by choosing representatives of any two quotient states.  Conversely, injectivity of $\bar\Theta$ makes equality of $\Theta$ equivalent to equality of $q_{\rm ev}$.  The inverse of $\bar\Theta$ on the reachable image recovers $q_{\rm ev}$, so every invariant observable factors through $\Theta$; extend that factor arbitrarily outside the reachable image.
\end{proof}

Let $a:\mathsf{Cont}\to\mathcal X$ be a deterministic group aggregator and define
\begin{equation}
E_a(x)=\sum_{k\in K_x}a(V_x(k)),
\qquad
\Phi_a(m)=\sum_{A\in\mathsf{Cont}}m(A)a(A).
\label{eq:ceq-additive}
\end{equation}

\begin{proposition}[Additive faithfulness criterion]
\label{prop:ceq-additive-faithfulness}
Every additive realization satisfies $E_a=\Phi_a\circ q_{\rm ev}$ and therefore enforces the declared nuisance invariance.  It is quotient-faithful if and only if the family $\{a(A):A\in\mathsf{Cont}\}$ is finitely integer-linearly independent:
\[
\sum_{A\in\mathsf{Cont}}c(A)a(A)=0,\qquad
c\in\mathbb Z^{(\mathsf{Cont})}
\quad\Longrightarrow\quad c=0,
\]
where the parenthesized exponent denotes finite support.  A faithful realization always exists by taking $\mathcal X=\mathbb R^{(\mathsf{Cont})}$ and $a(A)=e_A$, the canonical coordinate vector.
\end{proposition}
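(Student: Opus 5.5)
The plan has three parts: the factorization identity, faithfulness in both directions, and an explicit faithful witness.

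\emph{Factorization.} I would first verify $E_a=\Phi_a\circ q_{\rm ev}$ directly from the definitions. For any $x\in\mathsf{Cfg}$, $q_{\rm ev}(x)=\sum_{k\in K_x}\delta_{V_x(k)}$, so $q_{\rm ev}(x)(A)=\#\{k\in K_x:V_x(k)=A\}$. Hence
\[
\Phi_a(q_{\rm ev}(x))=\sum_{A}\#\{k:V_x(k)=A\}\,a(A)=\sum_{k\in K_x}a(V_x(k))=E_a(x),
\]
where regrouping the finite sum by content value is legitimate. Enforcement of the three nuisance invariances then follows from Theorem~\ref{prop:ceq-universal-factorization}, direction (iii)$\Rightarrow$(i).

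\emph{Faithfulness.} By Proposition~\ref{prop:ceq-minimal-sufficiency} (equivalently, surjectivity of $q_{\rm ev}$ from Proposition~\ref{prop:ceq-operational}), quotient-faithfulness of $E_a$ is equivalent to injectivity of $\Phi_a$ on all of $\mathsf{Quot}$. So it suffices to show that injectivity of $\Phi_a$ is equivalent to finite integer-linear independence.
\begin{itemize}
\item[] \emph{Independence implies injectivity.} Suppose $\Phi_a(m)=\Phi_a(m')$. Set $c=m-m'\in\mathbb Z^{(\mathsf{Cont})}$, which has finite support. Linearity gives $\sum_A c(A)a(A)=0$, so $c=0$ and $m=m'$.
\item[] \emph{Injectivity implies independence.} Take any finitely supported integer $c$ with $\sum_A c(A)a(A)=0$. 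Split it as $c=c^+-c^-$ with $c^\pm\ge0$ finitely supported, so $c^\pm\in\mathsf{Quot}$. Then $\Phi_a(c^+)=\Phi_a(c^-)$, injectivity gives $c^+=c^-$, and so $c=0$.
\end{itemize}
This positive/negative splitting is the only step needing care. It is what makes \emph{integer} rather than natural-number coefficients the right notion, because quotient states are $\mathbb N_0$-valued while their differences range over all of $\mathbb Z$.

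\emph{Existence.} For the faithful witness, take $\mathcal X=\mathbb R^{(\mathsf{Cont})}$, the finitely supported real functions with, say, the $\ell_1$ norm, and set $a(A)=e_A$. Then $\sum_A c(A)e_A$ is the function $c$ itself, so it vanishes only when $c=0$; the criterion holds, and hence $E_a$ is quotient-faithful.

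I expect no real obstacle. The main points to state carefully are the finite-support bookkeeping in the regrouping step and the reduction of faithfulness to injectivity of $\Phi_a$ via surjectivity of $q_{\rm ev}$.
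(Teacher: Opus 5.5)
Your proposal is correct and follows essentially the same route as the paper. Both regroup the finite sum to obtain $E_a=\Phi_a\circ q_{\rm ev}$, turn a nonzero integer relation into a collision via $c=c^+-c^-$ and surjectivity of $q_{\rm ev}$, turn a collision $\Phi_a(m)=\Phi_a(m')$ into the relation $c=m-m'$, and use the canonical coordinate vectors $e_A$ for existence, with the only cosmetic difference being that you reduce quotient-faithfulness to injectivity of $\Phi_a$ on all of $\mathsf{Quot}$ up front rather than invoking surjectivity inside one direction.
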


\begin{proof}
Grouping equal outer multiset atoms in Equation~\eqref{eq:ceq-additive} gives $E_a(x)=\Phi_a(q_{\rm ev}(x))$.  If a nonzero integer relation exists, decompose $c=c^+-c^-$ into distinct nonnegative finite multisets.  Then $\Phi_a(c^+)=\Phi_a(c^-)$, and surjectivity supplies two distinct quotient states that collide under $E_a$.  Conversely, any collision $\Phi_a(m)=\Phi_a(m')$ for $m\neq m'$ yields the nonzero integer relation $c=m-m'$.  The canonical coordinate vectors are integer-linearly independent and expose every multiset coefficient, so they give a faithful realization.
\end{proof}

\begin{corollary}[Prompt-visible enforcement criterion]
\label{cor:ceq-prompt-boundary}
Let $R:\mathsf{Cfg}\to\mathcal P$ be the actual model-visible renderer.  Uniform invariance of $h\circ R$ for every binary downstream map $h:\mathcal P\to\{0,1\}$ holds if and only if $R=\bar R\circ q_{\rm ev}$.  Therefore, if $x\sim y$ but $R(x)\neq R(y)$, a deterministic binary downstream rule exists that distinguishes the two rendered prompts.
\end{corollary}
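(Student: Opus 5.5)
The corollary reduces to Theorem~\ref{prop:ceq-universal-factorization} with $Y=\mathcal P$ and $\Psi=R$. The key step is to show that invariance of $h\circ R$ for \emph{every} binary $h:\mathcal P\to\{0,1\}$ is the same as invariance of $R$ itself. Here ``uniform invariance'' means $h(R(x))=h(R(y))$ whenever $x\sim y$.

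For the forward direction, suppose $R=\bar R\circ q_{\rm ev}$. Then $x\sim y$ means $q_{\rm ev}(x)=q_{\rm ev}(y)$. This forces $R(x)=R(y)$, and hence $h(R(x))=h(R(y))$ for every $h$, binary or not.

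For the converse, assume every binary composite is invariant, and suppose for contradiction that $x\sim y$ with $R(x)\neq R(y)$. I would take the indicator $h=\mathbf 1_{\{R(x)\}}$ on $\mathcal P$. It is a legitimate map $\mathcal P\to\{0,1\}$ because no measurability or structure is imposed on downstream maps. It gives $h(R(x))=1\neq0=h(R(y))$, contradicting invariance. Hence $R$ is constant on $\sim$-classes, which is condition (ii) of Theorem~\ref{prop:ceq-universal-factorization}. Condition (iii) then supplies a (unique) $\bar R:\mathsf{Quot}\to\mathcal P$ with $R=\bar R\circ q_{\rm ev}$. Uniqueness rests on the surjectivity established in Proposition~\ref{prop:ceq-operational}.

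The final sentence of the corollary is the contrapositive, made explicit. If $x\sim y$ but $R(x)\neq R(y)$, the same indicator $h=\mathbf 1_{\{R(x)\}}$ is a deterministic binary rule that separates the two rendered prompts. This is exactly the point-separation remark already used in the proof of Theorem~\ref{prop:ceq-universal-factorization}.

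There is no substantive obstacle. The only point needing care is to state clearly that the quantifier ranges over all functions $\mathcal P\to\{0,1\}$, so singleton indicators are admissible. If a restricted downstream family were intended, the argument would instead need that family to separate points of $R(\mathsf{Cfg})$, matching the caveat in the theorem's proof.
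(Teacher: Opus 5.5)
Your proposal is correct and matches the paper's proof, which also applies Theorem~\ref{prop:ceq-universal-factorization} with $\Psi=R$. The paper obtains necessity from the observation that any two distinct rendered prompts are separated by some binary map, and your singleton indicator $\mathbf 1_{\{R(x)\}}$ is exactly that map written out.
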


\begin{proof}
Apply Theorem~\ref{prop:ceq-universal-factorization} to $R$.  Any pair of distinct rendered prompts is separated by a binary map, establishing necessity of the factorization criterion.
\end{proof}

The corollary characterizes uniform invariance over all binary downstream maps; fixed-model response is measured empirically.  The default quotient carries content and group multiplicity, while Equation~\eqref{eq:ceq-signature-map} additionally preserves authenticated identity, credibility, or reliability metadata.  Opaque membership labels encode membership in both versions.

\subsection{Concrete Additive Construction and Grouping-Error Bound}

Let record $r$ support target $T$ or competitor $C$, and let $g(r)$ denote its supplied group.  With group sets $G_T$ and $G_C$, a unit-weight reference margin is
\begin{equation}
    D = b + |G_T|-|G_C|,
    \label{eq:rootmargin}
\end{equation}
where $b$ is the pre-context preference for $T$.  Copying a record while preserving the group set leaves $D$ unchanged.  An exact-replication-invariant representation preserves the evidence state when another canonical copy is added to its existing group.  The design separates evidential multiplicity from within-group content aggregation: unique content is aggregated into a bounded group representation, and each group contributes once.  We realize this property both with an exact stable-representative construction and with exact deduplication followed by lossless within-group concatenation.  Our mirrored topology gives the favored side two groups and the other side one group, then reverses that assignment while holding the records fixed.

We now give the complete guarantees for the external representation in Equation~\eqref{eq:grouped-evidence}.  The ambient record universe $U$, canonicalizer $z_{\rm ev}:U\to\mathcal Z_{\rm ev}$, and deterministic aggregator $a$ are fixed across all compared partitions, and $a$ receives the extensional set of canonical group elements.

\begin{proof}[Proof of Proposition~\ref{prop:external-replication-invariance}]
Let $G\in P$ be the block containing $i$, set $I'=I\cup\{i'\}$, and define
\[
P'=(P\setminus\{G\})\cup\{G\cup\{i'\}\}.
\]
Because $z_{\rm ev}(i')=z_{\rm ev}(i)$,
\[
\{z_{\rm ev}(j):j\in G\cup\{i'\}\}=\{z_{\rm ev}(j):j\in G\}.
\]
The changed block therefore supplies exactly the same set to the fixed aggregator $a$, while every other block is unchanged.  The two finite sums agree term by term, so $E_{I'}(P')=E_I(P)$.
\end{proof}

For two partitions $P$ and $Q$ of the same finite record set $I$, build a bipartite overlap graph whose vertices are their blocks and whose edges join blocks with nonempty record intersection.  Exclude exactly those connected components containing one $P$-block and one $Q$-block with identical record sets.  Let $\mathcal C_{\mathrm{err}}$ denote the remaining components; for $c\in\mathcal C_{\mathrm{err}}$, let $p_c$ and $q_c$ be its numbers of $P$- and $Q$-blocks, and define
\[
H(P,Q)=\sum_{c\in\mathcal C_{\mathrm{err}}}(p_c+q_c).
\]

\begin{proposition}[Partition-error bound]
\label{prop:partition-error-bound}
Suppose $B\geq0$ and $\|a(S)\|\leq B$ for every semantic set that occurs under $P$ or $Q$.  Then
\[
\|E_I(P)-E_I(Q)\|\leq B H(P,Q).
\]
\end{proposition}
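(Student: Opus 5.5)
The plan is to decompose the difference $E_I(P)-E_I(Q)$ along the connected components of the bipartite overlap graph, cancel the excluded components exactly, and bound each remaining component by the triangle inequality.

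\textbf{Step 1: each block lies in exactly one component.} Every block of $P$ and every block of $Q$ is a vertex of the overlap graph, so it belongs to exactly one connected component. Therefore
\[
E_I(P)-E_I(Q)=\sum_{c}\Bigl(\sum_{G\in P\cap c}a(A_G)-\sum_{G'\in Q\cap c}a(A_{G'})\Bigr),
\]
where $A_G=\{z_{\rm ev}(i):i\in G\}$ and the outer sum runs over all components $c$.

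\textbf{Step 2: excluded components contribute zero.} An excluded component contains one $P$-block and one $Q$-block with identical record sets. The fixed canonicalizer then sends both blocks to the same semantic set, and the fixed aggregator $a$ returns the same value on it. Their summands cancel exactly.

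\textbf{Step 3: bound the remaining components.} For each $c\in\mathcal C_{\rm err}$, the triangle inequality gives
\[
\Bigl\|\sum_{G\in P\cap c}a(A_G)-\sum_{G'\in Q\cap c}a(A_{G'})\Bigr\|\le (p_c+q_c)\,B.
\]
This uses the hypothesis $\|a(S)\|\le B$, which applies to every semantic set occurring under $P$ or $Q$. Summing over $c\in\mathcal C_{\rm err}$ and applying the triangle inequality once more yields $\|E_I(P)-E_I(Q)\|\le B\,H(P,Q)$.

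\textbf{Main obstacle.} The one step needing care is the structural claim in Step 1 that the graph partitions the blocks: no block is lost or counted twice.

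\begin{itemize}
\item Every record of $I$ lies in exactly one $P$-block and exactly one $Q$-block, so each block is nonempty and is a vertex.
\item Components are disjoint by definition, so the regrouping of the finite sums is valid.
\end{itemize}

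It is also worth confirming what the exclusion rule can cover. Take a component containing a $P$-block $G$ and a $Q$-block $G'$ with $G=G'$. Any block meeting $G$ must lie within $G$, because records in $G$ belong only to $G'$ on the $Q$ side. So such a component is exactly $\{G,G'\}$, and the cancellation in Step 2 is complete. All other components are charged in full.
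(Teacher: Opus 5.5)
Your proposal is correct and takes essentially the same route as the paper: cancel the excluded identical-block components, bound each erroneous component by $B(p_c+q_c)$ via the triangle inequality, and sum over the disjoint components. Your extra checks, that the components partition the blocks and that an excluded component is exactly the pair $\{G,G'\}$, make explicit points the paper's proof leaves implicit.
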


\begin{proof}
Every excluded component contributes the same record block under both partitions.  The same $z_{\rm ev}$ and $a$ therefore produce identical vectors, which cancel.  In an erroneous component $c$, write $P_c$ and $Q_c$ for its blocks.  The triangle inequality and the assumed bound give
\begin{align*}
\left\|\sum_{G\in P_c}a(\{z_{\rm ev}(i):i\in G\})
-\sum_{G\in Q_c}a(\{z_{\rm ev}(i):i\in G\})\right\|
&\leq \sum_{G\in P_c}\|a(\{z_{\rm ev}(i):i\in G\})\| \\
&\quad +\sum_{G\in Q_c}\|a(\{z_{\rm ev}(i):i\in G\})\| \\
&\leq B(p_c+q_c).
\end{align*}
The erroneous components are disjoint.  Summing their differences and applying the triangle inequality once more yields $\|E_I(P)-E_I(Q)\|\leq B H(P,Q)$.
\end{proof}

The coefficient one is worst-case sharp over this admissible class.  For any $B>0$, take $U=I=\{1,2\}$, $\mathcal X=\mathbb R$, and $\mathcal Z_{\rm ev}=\{z_1,z_2\}$ with $z_1\neq z_2$, where $z_{\rm ev}(1)=z_1$ and $z_{\rm ev}(2)=z_2$.  Define the fixed aggregator on every subset of $\mathcal Z_{\rm ev}$ by
\[
a(\varnothing)=0,\qquad
a(\{z_1\})=a(\{z_2\})=B,\qquad
a(\{z_1,z_2\})=-B.
\]
For $P=\{\{1\},\{2\}\}$ and $Q=\{\{1,2\}\}$, the overlap graph has one erroneous component with $H(P,Q)=3$, while $E_I(P)=2B$ and $E_I(Q)=-B$.  Hence $\|E_I(P)-E_I(Q)\|=3B=B H(P,Q)$.  This witness establishes worst-case sharpness over the allowed class.

\begin{proposition}[Sharp content-aware partition-error bound]
\label{prop:content-aware-error}
Fix a finite record set $I$, its canonicalizer, and partitions $P,Q$ of $I$.  For a fixed aggregator $a$ with $\|a(A)\|\le B$ on every occurring content set,
\[
\|E_I(P)-E_I(Q)\|\le B\Delta_q(P,Q)\le BH(P,Q).
\]
For each fixed pair $P,Q$ and $B\ge0$,
\[
\sup_{a:\,|a(A)|\le B}
\left|\sum_A(q_P(A)-q_Q(A))a(A)\right|
=B\Delta_q(P,Q),
\]
where the supremum is over scalar aggregators on finite nonempty canonical-content sets.  An $L$-Lipschitz downstream margin therefore obeys $|m(E_I(P))-m(E_I(Q))|\le LB\Delta_q(P,Q)$, and its nonzero sign is stable if $|m(E_I(P))|>LB\Delta_q(P,Q)$.
\end{proposition}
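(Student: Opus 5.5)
The plan is to write the evidence state through the quotient and then treat each claim in turn. By the factorization stated before Proposition~\ref{prop:equal-count-separation}, $E_I(P)=\Phi_a(q_P)$ and $E_I(Q)=\Phi_a(q_Q)$. Hence
\[
E_I(P)-E_I(Q)=\sum_A \bigl(q_P(A)-q_Q(A)\bigr)a(A),
\]
a finite sum over content sets occurring under $P$ or $Q$. The triangle inequality together with $\|a(A)\|\le B$ gives the first inequality $\|E_I(P)-E_I(Q)\|\le B\Delta_q(P,Q)$ directly.

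For $\Delta_q(P,Q)\le H(P,Q)$, I would use the overlap-graph components from Proposition~\ref{prop:partition-error-bound}.
\begin{itemize}
\item Each excluded component pairs one $P$-block and one $Q$-block with identical record sets. The two blocks therefore carry the same content set $A$, so the component contributes $\delta_A$ to both $q_P$ and $q_Q$.
\item Decompose $q_P=\sum_c q_P^{(c)}$ and $q_Q=\sum_c q_Q^{(c)}$ over components, which is possible because the components partition the blocks.
\item The triangle inequality for the $\ell_1$ norm gives $\|q_P-q_Q\|_1\le\sum_c\|q_P^{(c)}-q_Q^{(c)}\|_1$.
\item Excluded components contribute zero. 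Each erroneous component contributes at most $\|q_P^{(c)}\|_1+\|q_Q^{(c)}\|_1=p_c+q_c$.
\end{itemize}
Summing over components yields $\Delta_q\le H$.

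For sharpness, the upper bound on the supremum is the scalar case of the first inequality. For the matching lower bound, I would set $c(A)=q_P(A)-q_Q(A)$, which has finite support, and choose the aggregator $a(A)=B\operatorname{sgn}(c(A))$, with $a(A)=0$ where $c(A)=0$, including all non-occurring sets. This aggregator is admissible, and it attains $\sum_A c(A)a(A)=B\sum_A|c(A)|=B\Delta_q$. The case $B=0$ is trivial. The only subtle point is that the aggregator must be a fixed function on content sets, so that distinct blocks with equal content receive the same value. The sign choice is indexed by content set $A$, not by block, so this is consistent. This is the step I would check most carefully.

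Finally, the Lipschitz claim follows by composition: $|m(E_I(P))-m(E_I(Q))|\le L\|E_I(P)-E_I(Q)\|\le LB\Delta_q$. If $|m(E_I(P))|>LB\Delta_q$, then $m(E_I(Q))$ lies within distance strictly less than $|m(E_I(P))|$ of $m(E_I(P))$. So it is nonzero and has the same sign.
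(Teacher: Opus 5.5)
Your proposal is correct and follows essentially the same route as the paper: you write the difference as $\sum_A c_A a(A)$, bound it with the triangle inequality, cancel identical record blocks to get $\Delta_q\le H$, attain the supremum with the content-indexed aggregator $a(A)=B\operatorname{sgn}(c_A)$, and finish by Lipschitz composition. Your per-component bookkeeping for $\Delta_q\le H$ just spells out the paper's global cancellation. It implicitly uses that an excluded component consists of exactly the two identical blocks, which holds because such a block meets no other block of the opposite partition.
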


\begin{proof}
Put $c_A=q_P(A)-q_Q(A)$.  The fixed canonicalizer and fixed aggregator give
\[
E_I(P)-E_I(Q)=\sum_A c_Aa(A).
\]
The triangle inequality bounds the norm by $B\sum_A|c_A|$.  Identical record blocks cancel between the two partitions.  The total numbers of remaining blocks on the two sides sum to $H(P,Q)$; mapping these blocks to their content sets can cancel additional mass, so $\sum_A|c_A|\le H(P,Q)$.  For sharpness, choose the single fixed scalar function $a(A)=B\operatorname{sgn}(c_A)$, with $\operatorname{sgn}(0)=0$ and zero on absent content sets.  It attains $B\sum_A|c_A|$.  This is a worst-case choice for the given pair; a particular implemented aggregator can have a smaller distortion.  Lipschitz continuity and the strict margin argument prove the final two claims.
\end{proof}

The two discrepancies distinguish record-level and canonical-content-level changes.  For example, let $z_1=z_2=u$ and $z_3=z_4=v$ with $u\ne v$, and compare $P=\{\{1,3\},\{2,4\}\}$ with $Q=\{\{1,4\},\{2,3\}\}$.  Their overlap graph has $H(P,Q)=4$, but both quotient states equal $2\delta_{\{u,v\}}$, so $\Delta_q(P,Q)=0$ and every fixed set-based aggregator gives identical states.  Conversely, the four-distinct-element witness of Proposition~\ref{prop:equal-count-separation} has $H=\Delta_q=4$ and attains distortion $4B$, despite zero change in group count.  More generally, $\Delta_q$ is a pseudometric on partitions and becomes an $\ell_1$ metric on their distinct reachable quotient states.

\begin{corollary}[Downstream decision stability]
\label{cor:grouping-decision-stability}
Let $L\geq0$ and let $m:\mathcal X\to\mathbb R$ be $L$-Lipschitz.  Then
\[
|m(E_I(P))-m(E_I(Q))|\leq LBH(P,Q).
\]
For a binary decision boundary at $m=0$, the nonzero sign is unchanged whenever
\[
|m(E_I(P))|>LBH(P,Q).
\]
\end{corollary}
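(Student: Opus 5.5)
The plan is to reduce Corollary~\ref{cor:grouping-decision-stability} to Proposition~\ref{prop:partition-error-bound} (or, equivalently, to the chain in Proposition~\ref{prop:content-aware-error}), with the Lipschitz property of $m$ as the only new ingredient. I read the statement under the standing hypotheses of this subsection. The record set $I$, the canonicalizer $z_{\rm ev}$ and the aggregator $a$ are fixed across the two partitions, and $\|a(S)\|\le B$ holds for every semantic set occurring under $P$ or $Q$. With these hypotheses in place, Proposition~\ref{prop:partition-error-bound} gives directly
\[
\|E_I(P)-E_I(Q)\|\le B\,H(P,Q).
\]

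First I establish the perturbation bound. Because $m:\mathcal X\to\mathbb R$ is $L$-Lipschitz with respect to the norm on $\mathcal X$, we have
\[
|m(E_I(P))-m(E_I(Q))|\le L\|E_I(P)-E_I(Q)\|.
\]
Chaining this with the displayed bound and using $L\ge0$ to preserve the inequality gives $|m(E_I(P))-m(E_I(Q))|\le LBH(P,Q)$. Alternatively, one can route through $\Delta_q$ and use $\Delta_q\le H$ from Proposition~\ref{prop:content-aware-error}; either path gives the same conclusion.

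Second, I prove sign stability. Put $\mu=m(E_I(P))$ and $\nu=m(E_I(Q))$, and assume $|\mu|>LBH(P,Q)$. The inequality forces $\mu\neq0$, so its sign is well defined. Suppose $\mu>0$. Then
\[
\nu\ge \mu-|\mu-\nu|\ge \mu-LBH(P,Q)>0.
\]
The case $\mu<0$ is symmetric and gives $\nu<0$. In both cases $\operatorname{sgn}\nu=\operatorname{sgn}\mu$, so the decision at the boundary $m=0$ is unchanged.

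No step poses a real obstacle. The only care needed is to state that the hypotheses of Proposition~\ref{prop:partition-error-bound} are in force: the same $I$, $z_{\rm ev}$ and $a$ for both partitions, and the norm bound on all occurring sets. The other point to watch is that the margin inequality is strict, since that is what rules out $\nu=0$ at the boundary.
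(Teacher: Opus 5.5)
Your proposal is correct and follows the paper's proof: apply the Lipschitz property of $m$ to the bound $\|E_I(P)-E_I(Q)\|\le BH(P,Q)$ from Proposition~\ref{prop:partition-error-bound}, then use the strict margin inequality to rule out reaching the boundary. Your explicit case analysis $\nu\ge\mu-|\mu-\nu|\ge\mu-LBH(P,Q)>0$ just spells out the paper's one-line remark that the largest admissible change in the margin is smaller than its distance from zero.
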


\begin{proof}
The first claim follows by applying Lipschitz continuity to Proposition~\ref{prop:partition-error-bound}.  Under the displayed strict condition, the largest admissible change in the margin is smaller than its distance from zero, so the sign remains unchanged.  The displayed strict inequality provides a uniform two-sided guarantee: the perturbation remains strictly inside the reference margin.  At equality the perturbed margin can reach zero, so a fixed tie convention covers a single reference sign.
\end{proof}

These guarantees compose with an externally specified representation pipeline whose provenance signals, supplied partition, and canonicalized content provide the upstream inputs.  Authenticated signatures can carry verified identity without changing the role of opaque membership labels.  External caps on the group aggregator instantiate $B$, and an independently verified Lipschitz bound instantiates $L$.  Behavioral curves test grouping effects; system-level checks provide the numerical stability constants.

\section{Detailed Experimental Specifications}

\subsection{Cue and Topology Families}

The decision audit has five prompt families.  Marker swaps which value is labeled verified.  Graph swaps anonymous record-to-node edges.  Provenance uses the same graph with nodes explicitly described as origins.  Rule adds a same-origin-counts-once instruction.  Full crosses marker correctness and topology correctness.  Within a paired family, candidate strings, claims, record order, value-table order, literal offsets, and the full tokenizer multiset are fixed.

\subsection{Direct Pipeline Modes}

Edge asks whether two specified records share a root.  Count asks for the number of distinct roots supporting a deterministically selected abstract value.  Winner asks which value has more distinct roots.  Opaque oracle supplies the 2:1 count summary after removing the graph.  Semantic oracle supplies the same summary alongside the original fact task.  End-to-end supplies graph, rule, and fact task together.  All modes use direct semantic candidates.

\subsection{Repetition Doses}

At every dose, two records supporting the factual target arise from two distinct roots.  The competitor has one root, whose record is repeated $D$ times.  Raw hides root IDs; Prompt preserves all records while exposing the root mapping and deduplication rule.  External retains one representative per oracle root and removes root instructions from the final prompt.  This stable-representative operator certifies multiplicity control; the broader systems recommendation aggregates complementary within-root content under one bounded evidential contribution.  Forward and reverse record orders are both evaluated.  The External and Raw $D=1$ serialized inputs are identical for all 282 item-presentation-order pairs in each checkpoint.

\subsection{Natural-Text Supplied Units}
\label{app:phase8-specification}

Phase 8 freezes two domains before model inference.  \textsc{Human Evidence} contains 101 PERSPECTRUM claims grouped into 49 dependence components.  Each side supplies four different crowd-annotated stance-perspective clusters, matched one-to-one to side-exclusive evidence-unit IDs and unique normalized and visible-text hashes.  Each ID indexes one crowd-annotated stance-perspective evidence unit.  \textsc{Web Consensus} contains 138 ConflictingQA questions in 135 dependence components, with frozen two-model-consensus stance groupings.  Each selected eight-record panel has operationally separated registrable domains, canonical URLs, normalized-text hashes, and visible-text hashes across both sides.  A WEB unit is a supplied canonical-URL key under these frozen criteria.  Publisher identity and ownership enter through explicit authenticated signature coordinates.  The two domains are analyzed separately with domain-specific component bootstraps.

Text is normalized with Unicode NFKC, removal of control, format, and private-use characters, and whitespace collapse while preserving ASCII punctuation.  Eligible evidence has at least 400 normalized characters.  The model-visible field is a 400-character head--tail excerpt comprising the first 200 characters, the literal marker \texttt{ [...] }, and the last 193.  Every Base prompt contains three evidence records; Copy, Prompt, and Four-Unit contain six.  Copy and Four-Unit match record count and 2,400 visible evidence characters; Phase 11 additionally matches final character, byte, and tokenizer length exactly.

For each claim and attack side, a frozen hash selects two opposite-side anchors and one dose anchor.  \textsc{Base} contains these three units.  \textsc{Copy} repeats the dose anchor four times.  \textsc{Prompt} preserves the same six evidence texts and record count as Copy while showing short opaque aliases and a same-unit-counts-once rule.  \textsc{Four-Unit} retains the same anchors and dose anchor, replacing three copies with three other attack-side units.  The machine artifacts retain the frozen condition key \texttt{DISTINCT}.  Support and undermine attacks and original and reverse record orders yield 16 trials per claim.  Opaque aliases are fixed by canonical role before reversal; reversal preserves every field and changes the record sequence.

Candidates are the direct semantic strings \texttt{Yes} and \texttt{No}; scoring starts from each tokenizer's native assistant state and termination sequence.  The primary outcome is strict selection of the manipulated side, with an exact likelihood tie scored zero.  We average two attack sides and two orders within claim before forming contrasts.  Each domain uses its own frozen dependence-component bootstrap stream for every contrast and metric; a component joins claims that reuse a supplied key or final visible text.  The point estimand remains claim-weighted.  Ten thousand replicates use master seed 20260828 with deterministic domain-specific offsets (20260828 for WEB and 20270828 for HUMAN); auxiliary sensitivities use separately frozen offsets, and ordinary item bootstrap provides sensitivity estimates.

Copy--Base tests replication invariance: its 90\% interval must lie entirely inside $[-5,+5]$ points.  Four-Unit--Base, Four-Unit--Copy, and Copy--Prompt each require a point estimate of at least $+5$ points and a 95\% lower bound above zero.  These gates are evaluated separately for every checkpoint and domain.  For each checkpoint, 956 byte-for-byte checks show that stable collapse of Copy by its hidden unit key reproduces the corresponding Base prompt.  Consequently, upstream recovery is numerically identical to Copy--Base and represents the same evidence contrast.

The 5-point operational margin corresponds to approximately one changed choice per 20 constrained decisions.  The stricter 10-point Phase 6 marker criterion was likewise frozen before treatment inference.  The two gate types answer different questions: an equivalence band assesses whether an effect is small enough to treat as negligible, whereas a positive-effect gate assesses whether an estimate reaches a stated magnitude with direction supported by its interval.  Appendix Table~\ref{tab:threshold-sensitivity} reports a 2.5/5/7.5/10-point sensitivity sweep around the frozen gates.  Copy--Base lies outside equivalence in all eight model--domain cells at every margin; model-specific estimates and intervals are primary, and pass counts summarize gate-level magnitudes.

\subsection{Extended Checkpoint and Grouping Specifications}
\label{app:extended-specifications}

\paragraph{Added checkpoint grid.}
The complete Phase 8 grid is repeated with the same prompts, candidates, estimands, and bootstrap streams on Qwen3-14B-AWQ and Phi-4.  Qwen3-14B uses the publisher's AWQ 4-bit checkpoint; Phi-4 uses dynamic NF4 inference.  The Qwen3-8B NF4 rerun is counted as a quantization-control inference configuration under the existing Qwen3-8B checkpoint identity.  Each inference configuration completes 3,824 trials, for 11,472 new candidate-choice forward trials.  The main paper reports the two added 14B checkpoints; the control remains bound in the machine snapshot and trial accounting.

\paragraph{Controlled answer-plus-one-sentence generation.}
The generation panel freezes 40 HUMAN and 40 WEB items, one per Phase 8 dependence component, before inference.  For each item it crosses \textsc{Base}, raw four-copy, and prompt-rule conditions with two attack sides and two record orders, yielding 960 trials per checkpoint.  Qwen3-8B runs in bfloat16, Qwen3-14B uses its official AWQ checkpoint, and Phi-4 uses dynamic NF4 inference.  Greedy decoding requests a leading \texttt{Yes} or \texttt{No} followed by one concise sentence, with at most 48 new tokens.  All-trial leading-answer selection is the primary endpoint; invalid outputs receive zero attack selection.  The valid-output analysis is a frozen sensitivity.  The explanatory sentence is retained in the output, and scoring uses deterministic leading-answer parsing.

For each domain and checkpoint, we average the two attack sides and two orders within item, then bootstrap the 40 frozen dependence components 10,000 times.  All three formal runs contain 960/960 predictions and zero invalid outputs, for 2,880 controlled-generation trials.

\paragraph{False splits and false merges after fixing supplied keys.}
This audit conditions on the supplied Phase 8 grouping keys and injects partition error before record emission.  Eligibility requires both sides of a claim to contain a frozen document of at least 160 whitespace-delimited words.  The resulting HUMAN panel has 66 claims in 32 dependence components; the WEB panel retains 138 questions in 135 components.  This filter is model-independent and fixed before treatment inference.

For the one-group panel, the attack material consists of four non-overlapping 40-word windows from one frozen document.  For the four-group panel, four 40-word windows come from four supplied operational units.  Estimated keys emit one, two, or four records after within-group exact-text deduplication and a fixed per-group content budget.  The richer \textsc{Partial-G1-Full-Content} endpoint concatenates all four unique windows into one 160-word record while keeping estimated group count at one; subtracting \textsc{Partial-G1} measures the full-window inclusion gain relative to the one-window 40-word endpoint.  The main content-fixed contrasts retain all 160 attack words in the one-record condition: \textsc{Partial-G4} minus \textsc{Partial-G1-Full-Content} measures false splitting of one supplied document, and \textsc{Dist-G4} minus \textsc{Dist-G1-Full-Content} measures the loss from merging four supplied operational units.  In each pair, the ordered attack words are identical and record placement implements the grouping intervention.

The primary outcome is the attack-side likelihood share in Equation~\eqref{eq:attack-probability}.  Secondary endpoints are strict selection, half-tie selection, and likelihood margin.  Two attack sides and two orders are averaged within item, and 10,000 bootstraps resample supplied-key or final-text connected components while retaining all member claims.  Twelve conditions produce 9,792 trials per checkpoint and 39,168 across Qwen3-8B, Qwen3-4B, Phi-4-mini, and Mistral-7B.  Conditioning on the supplied operational keys, the experiment identifies the causal effect of downstream record placement.

\paragraph{Matched six-slot serialization control.}
Phase 9 fixes evidence words while jointly varying partition rendering and its deterministic serialization.  Its four-record arm contains three additional JSON objects, ordinal record IDs, side and evidence headers, and object separators.  The serialized difference is 147--153 UTF-8 bytes and 43--63 tokenizer input tokens across the four checkpoints, while the evidence-token difference is bounded by five.  Table~\ref{tab:phase9-main} therefore estimates the total effect of operational record placement under fixed evidence words.

Phase 11 removes these count and length differences.  Both arms use six JSON objects with the same R1--R6 identifiers, side labels, headers, separators, instruction, query, claim, anchor content, and candidate format.  The same ordered 160-word attack stream is placed in four attack-side slots as either $[160,0,0,0]$ or $[40,40,40,40]$ words; the empty slots and their headers remain model-visible in both arms.  A deterministic outcome-blind search adds up to four whitespace characters after the final JSON brace so that each tokenizer-specific pair has exactly the same final character count, UTF-8 byte count, and input-token count while parsing to the same JSON records.  Model-visible, parse-inert whitespace supplies the exact length match.  The identified treatment is the placement of a fixed word stream across one versus four nonempty content-bearing record slots within a matched six-slot skeleton.

Each checkpoint completes 3,264 choice trials, yielding \PhaseElevenChoiceTrials{} trials and \PhaseElevenCandidateForwardCalls{} physical candidate forward calls with zero failures.  Two sides and two orders are averaged within item; 10,000 replicates resample the same dependence components used by the parent audit.  All 16 model--domain--contrast point estimates have the predicted positive sign, and 12 intervals lie entirely above zero.  The matched control identifies a partition-boundary effect beyond extra objects, repeated headers, identifiers, separators, and total tokenizer length.

\begin{table}[H]
\caption{Phase 11 matched-serialization control. Effects are percentage-point changes in $p_{\mathrm{attack}}$ with 95\% dependence-component bootstrap intervals. The intervention is matched six-slot serialization; the same ordered evidence word stream is placed across one versus four nonempty content-bearing record slots. Model-visible, parse-inert pair padding equalizes final character, byte, and tokenizer lengths. The estimand is operational content placement across record slots within the matched skeleton.}
\label{tab:phase11-serialization-control}
\centering
\small
\setlength{\tabcolsep}{3.2pt}
\begin{tabular}{llrr}
\toprule
Model & Domain & One supplied unit: split & Four supplied units: merge loss \\
\midrule
Qwen3-8B & WEB (138/135) & +8.27 [+5.73, +10.82] & +8.00 [+5.18, +10.88] \\
Qwen3-8B & HUMAN (66/32) & +5.92 [+2.39, +8.87] & +11.27 [+7.37, +15.96] \\
Qwen3-4B & WEB (138/135) & +9.95 [+6.90, +13.04] & +13.33 [+10.29, +16.43] \\
Qwen3-4B & HUMAN (66/32) & +1.24 [-2.48, +5.69] & +7.47 [+3.57, +12.23] \\
Phi-4-mini & WEB (138/135) & +1.24 [+0.38, +2.29] & +1.25 [+0.40, +2.39] \\
Phi-4-mini & HUMAN (66/32) & +0.63 [-0.02, +1.65] & +0.99 [+0.01, +2.44] \\
Mistral-7B & WEB (138/135) & +3.22 [+1.11, +5.27] & +1.46 [-0.40, +3.34] \\
Mistral-7B & HUMAN (66/32) & +6.46 [+3.65, +8.84] & +2.33 [-0.65, +5.73] \\
\bottomrule
\end{tabular}
\end{table}

\paragraph{Controlled GEO proliferation panel.}
The frozen panel contains 48 fictional product pairs.  For each product and attack side, the same-root family uses four exactly 40-word genre-varied records bound to one three-fact campaign brief and one oracle root.  The independent-root family uses four exactly 40-word records bound respectively to laboratory, user-panel, endurance, and service-data facts generated by four oracle roots.  The source validator confirms that every visible claim belongs to its enumerated atomic-fact set, each attack stream contains exactly 160 ordered words, and hidden root, author, and synthetic-domain fields are absent from the prompt.  Both structures use the Phase 11 six-slot renderer and compare $[160,0,0,0]$ with $[40,40,40,40]$ under exact character, UTF-8 byte, and model-token length matching.

Each checkpoint completes 768/768 choices with zero failures, yielding \PhaseTwelveChoiceTrials{} choices and \PhaseTwelveCandidateForwardCalls{} candidate forward calls.  Every condition--attack-side--order cell contains 48 trials.  The primary item-first estimand averages the two attack sides and two orders before taking the contrast; 10,000 fixed-seed bootstrap replicates resample the 48 items.  A descriptive mirror decomposition reveals substantial presentation interactions.  For Qwen3-8B, the independent-root contrast is $+41.43$ points in original order and $-49.92$ in reverse order.  The main table therefore reports balanced mirrored effects for each checkpoint and preserves both reversals whose pointwise 95\% intervals exclude zero.

\paragraph{Grouping baselines and downstream distortion.}
We evaluate four grouping rules on 192 within-item partition problems containing 768 records.  Oracle root identifiers and single-link cosine clustering with all-MiniLM-L6-v2 at threshold 0.80 recover every partition.  Normalized exact hashing and 128-permutation, three-word-shingle MinHash at threshold 0.80 emit all four same-root records as singletons while preserving all four independent roots.  On the 96 same-root scopes, these lexical methods have pairwise recall and F1 equal to zero, 576 false-split pairs, adjusted Rand index zero, B-cubed F1 $0.40$, and total $H=480$.  Table~\ref{tab:phase12-grouping-baselines} reports partition accuracy, and Table~\ref{tab:phase12-downstream-distortion} maps every predicted partition to the corresponding executed intervention arm.  Thresholds were fixed before result inspection.  The embedding result establishes separability for these controlled templates; Proposition~\ref{prop:text-only-provenance} characterizes the observational-collision boundary for universal exact recovery.

\begin{table}[t]
\centering
\caption{Grouping baselines on the controlled provenance panel.  Exact partition rates are over 96 attack-side scopes per provenance structure.  $H$ is the total partition-error quantity from Proposition~\ref{prop:partition-error-bound}.}
\label{tab:phase12-grouping-baselines}
\small
\begin{tabular}{lrrr}
\toprule
Method & \shortstack{Same-root\\exact} & \shortstack{Independent-root\\exact} & Total $H$ \\
\midrule
Oracle root & 100\% & 100\% & 0 \\
Exact normalized hash & 0\% & 100\% & 480 \\
MinHash & 0\% & 100\% & 480 \\
Sentence embedding & 100\% & 100\% & 0 \\
\bottomrule
\end{tabular}
\end{table}

\begin{table}[t]
\centering
\caption{Downstream grouping distortion on same-root campaign scopes.  Each entry is $|p_{\rm attack}(\widehat P)-p_{\rm attack}(P^\star)|$ in percentage points.  The predicted partitions coincide exactly with executed matched-intervention arms.  Independent-root distortion is zero for all three methods because each preserves the four oracle roots.}
\label{tab:phase12-downstream-distortion}
\small
\begin{tabular}{lrrrr}
\toprule
Method & Qwen3-8B & Qwen3-4B & Phi-4-mini & Mistral-7B \\
\midrule
Exact normalized hash & 18.46 & 9.66 & 27.44 & 4.44 \\
MinHash & 18.46 & 9.66 & 27.44 & 4.44 \\
Sentence embedding & 0.00 & 0.00 & 0.00 & 0.00 \\
\bottomrule
\end{tabular}
\end{table}

\section{Additional Results}

\subsection{Order Decomposition of Controlled Partition Effects}
\begin{table}[H]
\centering
\caption{Post-hoc order decomposition of the existing controlled GEO predictions.  Both families use four content-bearing records minus one grouped record.  Entries are percentage points with pointwise 95\% bootstrap intervals over 48 item-components after averaging the two attack sides.  Interaction is original minus reverse, estimated with paired resampling.  Each checkpoint and family is reported separately; the intervals are descriptive.  Values smaller than 0.005 points in magnitude round to 0.00.}
\label{tab:geo-order-decomposition}
\footnotesize
\setlength{\tabcolsep}{3pt}
\begin{tabular}{llrrr}
\toprule
Model & Roots & Original order & Reverse order & Order interaction \\
\midrule
Qwen3-8B & Same root & $+16.83[+12.37,+21.54]$ & $+20.09[+15.91,+24.46]$ & $-3.26[-9.91,+3.37]$ \\
Qwen3-8B & Independent & $+41.43[+36.15,+46.22]$ & $-49.92[-49.95,-49.88]$ & $+91.36[+86.07,+96.14]$ \\
Qwen3-4B & Same root & $-13.97[-19.26,-9.39]$ & $+33.28[+29.53,+36.85]$ & $-47.26[-53.93,-41.21]$ \\
Qwen3-4B & Independent & $+10.98[+6.43,+15.92]$ & $-0.19[-0.28,-0.12]$ & $+11.17[+6.63,+16.10]$ \\
Phi-4-mini & Same root & $+17.95[+14.19,+21.99]$ & $+36.93[+33.56,+39.98]$ & $-18.98[-23.42,-14.32]$ \\
Phi-4-mini & Independent & $+19.15[+15.09,+23.29]$ & $+12.31[+9.13,+15.79]$ & $+6.84[+2.33,+11.35]$ \\
Mistral-7B & Same root & $-8.88[-14.64,-3.29]$ & $0.00[0.00,0.00]$ & $-8.88[-14.64,-3.29]$ \\
Mistral-7B & Independent & $+6.03[+3.27,+9.28]$ & $+0.01[+0.01,+0.01]$ & $+6.03[+3.26,+9.27]$ \\
\bottomrule
\end{tabular}
\end{table}

Table~\ref{tab:geo-order-decomposition} reuses all 3,072 stored Phase 12 choices.  Each model has 48 items, two root families, two attack sides, two order mirrors, and two partition arms.  For each item, family, and order, we subtract the one-group probability from the four-group probability and then average the two attack sides.  The original-minus-reverse contrast is formed within the same item.  We resample the 48 distinct item-components 10,000 times with seed 20260907, keeping every paired contrast together, and report percentile intervals.  These are post-hoc descriptive intervals; the primary balanced estimates and their original intervals remain as reported in Table~\ref{tab:phase12-geo}.  The decomposition makes the presentation dependence of each checkpoint visible alongside its balanced effect.

\subsection{Checkpoint and Generation Breadth}

\begin{table}[H]
\caption{Breadth checks at added 14B checkpoints and in controlled generation, in percentage points with 95\% dependence-component bootstrap intervals. The generation endpoint is the parsed leading Yes/No answer.}
\label{tab:phase9-breadth}
\centering
\scriptsize
\setlength{\tabcolsep}{3.0pt}
\begin{tabular}{@{}llrr@{}}
\toprule
Checkpoint & Domain & Copy--Base & Copy--Prompt \\
\midrule
\multicolumn{4}{@{}l}{\emph{Added 14B checkpoints, complete-candidate likelihood}} \\
Qwen3-14B official AWQ 4-bit & HUMAN & $+51.49[+45.71,+57.43]$ & $+4.70[+0.24,+9.23]$ \\
Qwen3-14B official AWQ 4-bit & WEB & $+30.62[+25.90,+35.40]$ & $+0.91[-2.14,+3.93]$ \\
Phi-4 14B dynamic NF4 & HUMAN & $+51.98[+45.41,+58.82]$ & $+9.16[+5.46,+12.99]$ \\
Phi-4 14B dynamic NF4 & WEB & $+43.30[+38.13,+48.52]$ & $+1.63[-1.48,+4.86]$ \\
\midrule
\multicolumn{4}{@{}l}{\emph{Controlled generation, leading-answer endpoint}} \\
Checkpoint & Domain & \multicolumn{2}{c}{Leading Copy--Base} \\
\midrule
Qwen3-8B & HUMAN & \multicolumn{2}{c}{$+20.62[+15.00,+26.25]$} \\
Qwen3-8B & WEB & \multicolumn{2}{c}{$+15.62[+9.38,+22.50]$} \\
Qwen3-14B-AWQ & HUMAN & \multicolumn{2}{c}{$+16.25[+9.38,+23.12]$} \\
Qwen3-14B-AWQ & WEB & \multicolumn{2}{c}{$+16.88[+10.62,+23.12]$} \\
Phi-4 & HUMAN & \multicolumn{2}{c}{$+20.00[+13.75,+26.88]$} \\
Phi-4 & WEB & \multicolumn{2}{c}{$+20.00[+13.12,+27.50]$} \\
\bottomrule
\end{tabular}
\end{table}

\subsection{Grouping-Key Robustness Curves}

\begin{figure}[H]
\centering
\includegraphics[width=\textwidth]{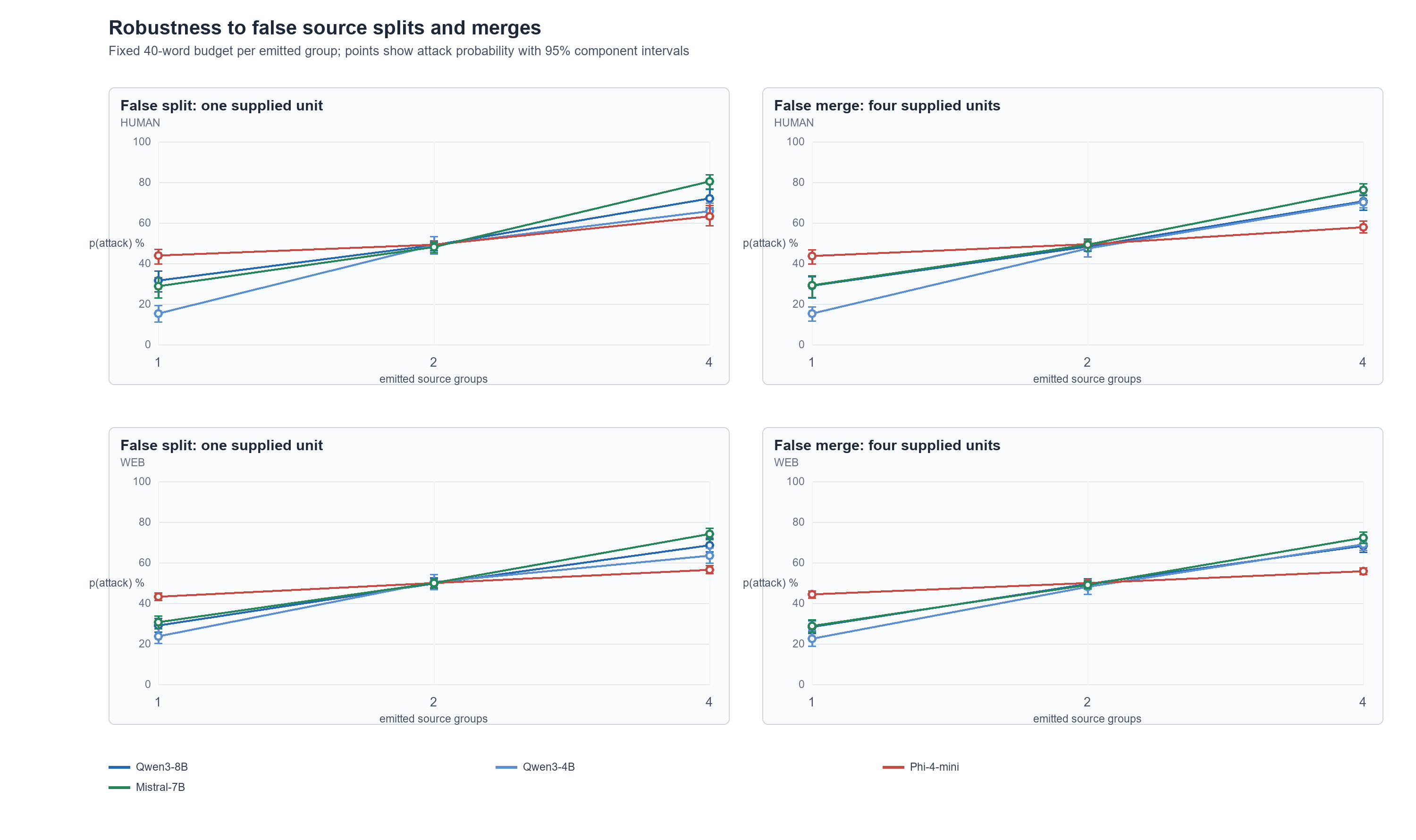}
\caption{Fixed-budget supplied-key robustness curves. The partial-copy panels split non-overlapping 40-word windows from one frozen document into one, two, or four emitted records. The four-unit panels merge four operationally distinct supplied units into one, two, or four emitted records. The curves jointly vary grouping and the fixed per-group content budget; the content-fixed contrasts in Table~\ref{tab:phase9-main} provide the primary comparison. HUMAN contains 66 claims in 32 dependence components; WEB contains 138 claims in 135 components. Each domain and checkpoint is plotted separately.}
\label{fig:phase9-key-noise}
\end{figure}

The dose curves vary emitted group count together with the fixed 40-word budget per group and show system behavior as an estimated partition moves between one, two, and four groups.  Table~\ref{tab:phase9-main} reports both the one-group full-window inclusion gain and the content-fixed comparisons in which the ordered 160 attack words remain constant while record placement changes.

\subsection{Multiplicity-Adjusted Bootstrap Sensitivity}
\label{app:multiplicity-sensitivity}

\begin{table}[H]
\caption{Bonferroni familywise bootstrap sensitivity.  Counts show cells whose two-sided interval has a positive lower bound.  Adjusted 95\% intervals use percentile $\alpha/(2m)$ tails with $m=24$ for every Table~\ref{tab:phase9-main} row and $m=16$ for the matched six-slot row.}
\label{tab:multiplicity-sensitivity}
\centering
\small
\setlength{\tabcolsep}{5pt}
\begin{tabular}{@{}lrrr@{}}
\toprule
Reported subset & Cells & Ordinary 95\% & Familywise 95\% \\
\midrule
Table 1: full-content gain & 8 & 8/8 & 7/8 \\
Table 1: content-fixed split/merge & 16 & 16/16 & 16/16 \\
Table 1: all contrasts & 24 & 24/24 & 23/24 \\
Matched six-slot control & 16 & 12/16 & 11/16 \\
\bottomrule
\end{tabular}
\end{table}

We apply Bonferroni-adjusted percentile intervals within two design-distinct result families using 10,000 dependence-component bootstrap replicates regenerated from the frozen seeds.  For the 24 Table~\ref{tab:phase9-main} cells, simultaneous two-sided 95\% intervals retain positive lower bounds in 23 cells: all 16 content-fixed split/merge effects and seven of eight full-content gains.  The Phi-4-mini/WEB full-content gain is $+2.91$ points with adjusted interval $[-0.02,+5.95]$.  Directional familywise lower bounds are positive in all 24 cells.  For the 16 matched six-slot effects, 11 simultaneous two-sided intervals retain positive lower bounds.  These counts summarize multiplicity sensitivity; the model--domain estimates remain the reported effects.

\subsection{Observable Pipeline Stages}

\begin{table}[H]
\caption{Equal-domain strict accuracy (\%) for observable pipeline stages.  Each estimate averages mirrored directions and three presentations within item.}
\label{tab:stages}
\centering
\scriptsize
\begin{tabular}{@{}lrrrrrr@{}}
\toprule
Checkpoint & Edge & Count & Winner & Opaque 2:1 & Semantic 2:1 & End-to-end \\
\midrule
Qwen3-8B & 58.85 & 47.10 & 51.04 & 100.00 & 54.30 & 49.64 \\
Qwen3-4B & 54.26 & 48.93 & 48.94 & 98.93 & 53.26 & 49.64 \\
Phi-4-mini & 50.00 & 49.61 & 51.77 & 74.85 & 57.23 & 49.28 \\
\bottomrule
\end{tabular}
\end{table}

\subsection{Phase 6 Behavioral Gates}

Across the frozen Marker, M--P, and Rescue gates, all four checkpoints remain below threshold.  The provenance-equivalence gate is satisfied by Qwen3-8B, Qwen3-4B, and Mistral; Phi lies outside the equivalence band.  The Qwen3-8B tie leaves every gate unchanged under the half-tie sensitivity analysis.

\begin{table}[H]
\caption{Complete-candidate effects on the equal-domain 47-item panel.  Likelihood contrasts use 95\% item-bootstrap intervals.  Accuracy effects are percentage points; provenance uses a 90\% interval for the frozen equivalence test and the others use 95\% intervals.}
\label{tab:decision}
\centering
\scriptsize
\begin{tabular}{@{}lrrr@{}}
\toprule
Checkpoint & Marker LL & Prov. LL & M--P LL \\
\midrule
Qwen3-8B & $1.618[1.350,1.876]$ & $-0.349[-0.531,-0.170]$ & $1.967[1.590,2.321]$ \\
Qwen3-4B & $2.175[1.343,3.180]$ & $-0.243[-0.448,-0.045]$ & $2.418[1.553,3.435]$ \\
Phi-4-mini & $1.102[0.628,1.598]$ & $0.306[0.025,0.602]$ & $0.796[0.248,1.320]$ \\
Mistral-7B & $2.586[2.086,3.115]$ & $-0.213[-0.301,-0.127]$ & $2.799[2.279,3.347]$ \\
\bottomrule
\end{tabular}

\vspace{2pt}
\begin{tabular}{@{}lrrrr@{}}
\toprule
Checkpoint & Marker acc. & Prov. acc. & M--P acc. & Rescue acc. \\
\midrule
Qwen3-8B & $2.87[0.69,5.74]$ & $-0.72[-2.17,0.00]$ & $3.59[0.69,7.25]$ & $0.00[0.00,0.00]$ \\
Qwen3-4B & $2.90[0.00,6.52]$ & $0.00[-1.45,1.45]$ & $2.90[0.00,6.52]$ & $0.00[0.00,0.00]$ \\
Phi-4-mini & $3.47[-2.90,9.78]$ & $2.93[-1.36,7.31]$ & $0.54[-7.94,8.42]$ & $0.66[-3.62,5.62]$ \\
Mistral-7B & $3.59[0.72,6.52]$ & $0.00[0.00,0.00]$ & $3.59[0.72,6.49]$ & $-1.45[-3.62,0.00]$ \\
\bottomrule
\end{tabular}
\end{table}

\subsection{Phase 7 Primary Contrasts}

\begin{table}[H]
\caption{Same-root repetition contrasts in percentage points, equal-domain item means with 95\% bootstrap intervals.  A pass requires an estimate of at least 5 points and an interval lower bound above zero.}
\label{tab:phase7-contrasts}
\centering
\scriptsize
\begin{tabular}{@{}lrrrr@{}}
\toprule
Checkpoint & Raw D1--D8 & Prompt--Raw D8 & External--Raw D8 & External--Prompt D8 \\
\midrule
Qwen3-8B & $1.42[-0.03,3.20]$ & $0.35[-2.48,3.56]$ & $1.42[-0.03,3.19]$ & $1.07[-1.45,3.61]$ \\
Qwen3-4B & $4.35[1.09,8.70]$ & $0.00[-1.81,1.81]$ & $4.35[1.09,8.70]$ & $4.35[0.72,8.70]$ \\
Phi-4-mini & $27.02[21.60,32.43]$ & $-7.02[-12.94,-0.95]$ & $27.02[21.63,32.44]$ & $34.04[27.94,40.07]$ \\
\bottomrule
\end{tabular}
\end{table}

Phi satisfies the attack, external-recovery, and external-advantage gates.  The prompt-recovery pass count is 0/3.  Because External equals Raw $D=1$, external recovery and attack reversal are the same numerical contrast.

The Phi attack remains positive in both domains (30.43 and 23.61 points), all three presentations (38.41, 24.18, and 18.48 points), and both record orders (33.06 and 20.98 points).  Across its six presentation-by-order cells, the magnitude spans 4.35 to 48.64 points.  Qwen3-4B yields a 4.35-point aggregate change on \known{} items and five of 47 items.

\begin{figure}[H]
    \centering
    \includegraphics[width=0.66\linewidth]{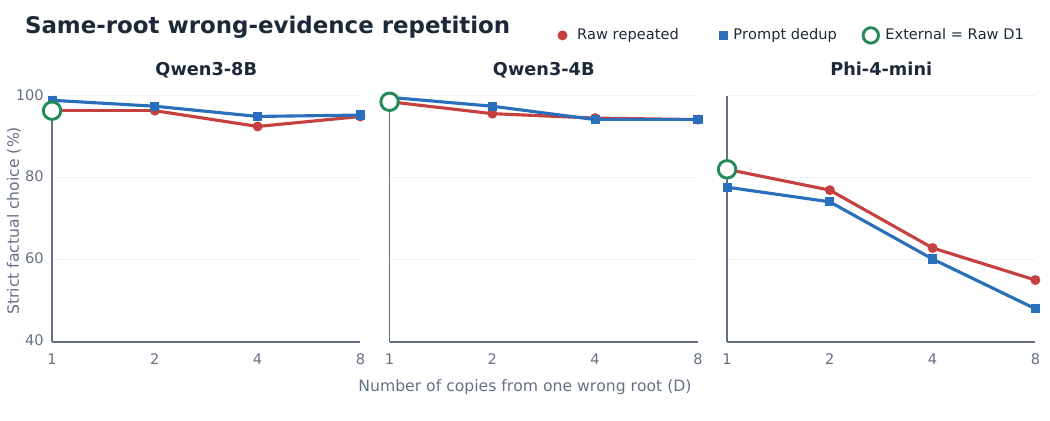}
    \caption{Strict factual choice in the controlled 47-item panel under repeated records from one wrong root.  Raw prompts hide shared roots; Prompt prompts show root IDs and a deduplication rule.  The green external point is the oracle-folded input and is exactly the Raw $D=1$ input.  The vertical axis starts at 40\%; points are equal-domain item means and paired-contrast intervals appear in Table~\ref{tab:phase7-contrasts}.}
    \label{fig:dose}
\end{figure}

\subsection{Phase 8 Gate Pattern and Token Sensitivity}

\begin{figure}[H]
    \centering
    \includegraphics[width=0.9\linewidth]{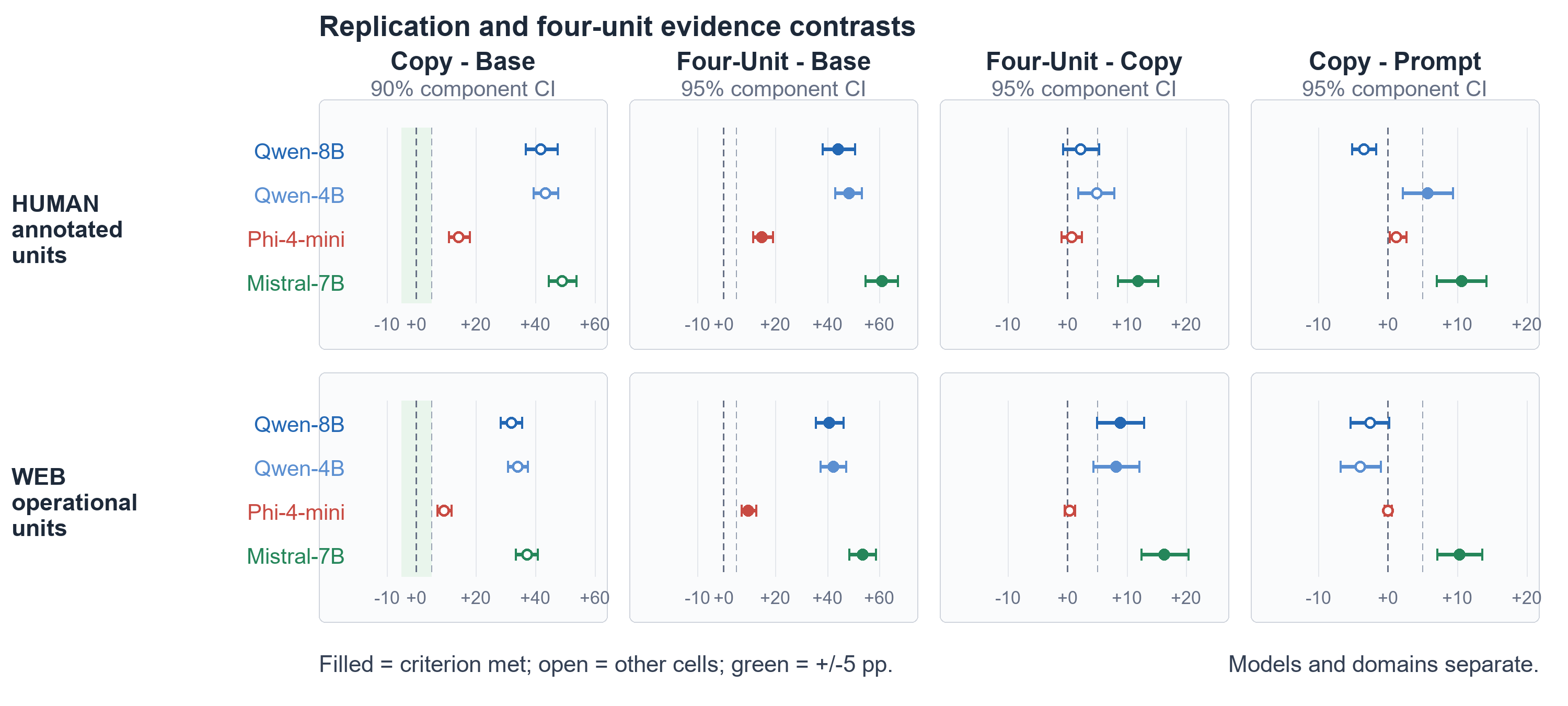}
    \caption{Record multiplicity changes complete-candidate decisions in both natural-evidence domains.  Copy--Base uses a 90\% interval and the shaded $\pm5$-point invariance band; other contrasts use 95\% intervals.}
    \label{fig:phase8}
\end{figure}

\begin{table}[t]
\caption{Phase 8 strict attack-side selection effects in percentage points.  HUMAN uses annotated evidence units.  WEB uses supplied canonical-URL units under the frozen operational separation criteria; a two-model consensus fixes stance labels.  Copy--Base uses a 90\% dependence-component bootstrap interval for the frozen $\pm5$-point invariance gate; the other columns use 95\% intervals.  Bold cells satisfy their prespecified gate.  Every domain and checkpoint is analyzed separately.}
\label{tab:phase8}
\centering
\scriptsize
\setlength{\tabcolsep}{3.2pt}
\begin{tabular}{@{}lrrrr@{}}
\toprule
Checkpoint & Copy--Base & Four-Unit--Base & Four-Unit--Copy & Copy--Prompt \\
\midrule
\multicolumn{5}{@{}l}{\textbf{HUMAN: annotated supplied evidence units}} \\
Qwen3-8B & $+41.83[+36.73,+47.30]$ & \textbf{$+44.06[+38.11,+50.57]$} & $+2.23[-0.74,+5.29]$ & $-3.47[-5.18,-1.72]$ \\
Qwen3-4B & $+43.32[+39.24,+47.58]$ & \textbf{$+48.27[+42.93,+53.04]$} & $+4.95[+1.74,+7.80]$ & \textbf{$+5.69[+2.13,+9.31]$} \\
Phi-4-mini & $+14.11[+10.82,+17.93]$ & \textbf{$+14.85[+11.25,+19.02]$} & $+0.74[-1.02,+2.39]$ & $+1.24[+0.23,+2.67]$ \\
Mistral-7B & $+49.01[+44.48,+53.68]$ & \textbf{$+60.89[+54.52,+67.07]$} & \textbf{$+11.88[+8.43,+15.23]$} & \textbf{$+10.64[+6.99,+14.18]$} \\
\addlinespace[2pt]
\multicolumn{5}{@{}l}{\textbf{WEB: operational supplied URL units; frozen stance consensus}} \\
Qwen3-8B & $+31.88[+28.26,+35.51]$ & \textbf{$+40.76[+35.43,+46.17]$} & \textbf{$+8.88[+4.96,+12.86]$} & $-2.54[-5.40,+0.18]$ \\
Qwen3-4B & $+34.06[+30.70,+37.41]$ & \textbf{$+42.21[+37.23,+47.08]$} & \textbf{$+8.15[+4.35,+12.04]$} & $-3.99[-6.87,-1.09]$ \\
Phi-4-mini & $+9.24[+6.93,+11.69]$ & \textbf{$+9.60[+6.83,+12.59]$} & $+0.36[-0.54,+1.27]$ & $+0.00[-0.54,+0.54]$ \\
Mistral-7B & $+37.14[+33.33,+40.78]$ & \textbf{$+53.44[+48.33,+58.45]$} & \textbf{$+16.30[+12.41,+20.29]$} & \textbf{$+10.33[+7.06,+13.59]$} \\
\bottomrule
\end{tabular}
\vspace{1pt}
\begin{minipage}{0.98\linewidth}\footnotesize
Positive values mean more influence from the manipulated side.  Certified upstream collapse reproduces BASE byte-for-byte; its recovery is the same paired contrast as Copy--Base.
\end{minipage}
\end{table}

Appendix Table~\ref{tab:phase8} contains all strict selection contrasts.  Across eight separately evaluated model--domain cells, replication invariance is 0/8, Four-Unit--Base sensitivity is 8/8, Four-Unit--Copy selectivity is 4/8, and prompt recovery is 3/8.  Model-specific effects and intervals remain the inferential unit; these counts summarize the eight frozen decisions.

\begin{table}[t]
\caption{Near-token-balanced sensitivity for Four-Unit--Copy.  The subset was frozen from tokenizer counts before model inference.  Intervals use the same within-domain dependence-component resampling rule.}
\label{tab:phase8-token-sensitivity}
\centering
\scriptsize
\setlength{\tabcolsep}{3.0pt}
\begin{tabular}{@{}llrr@{}}
\toprule
Domain & Checkpoint & $n$ & Selectivity [95\% CI] \\
\midrule
HUMAN & Qwen3-8B & 57 & $+0.44[-3.19,+4.17]$ \\
 & Qwen3-4B & 57 & $+6.14[+1.96,+9.69]$ \\
 & Phi-4-mini & 72 & $+0.69[-1.72,+2.87]$ \\
 & Mistral-7B & 50 & $+11.00[+5.77,+15.69]$ \\
\addlinespace[2pt]
WEB & Qwen3-8B & 79 & $+9.49[+4.17,+14.87]$ \\
 & Qwen3-4B & 79 & $+8.86[+3.12,+14.74]$ \\
 & Phi-4-mini & 89 & $-0.28[-1.37,+0.57]$ \\
 & Mistral-7B & 57 & $+14.47[+7.89,+20.61]$ \\
\bottomrule
\end{tabular}
\end{table}

The near-token-balanced sensitivity subset uses the pre-inference criterion of a mean absolute Copy--Four-Unit input-token difference bounded by 20 over the four attack-side/order cells.  The prespecified gates apply to the full analysis.  Full strict, half-tie, attack-probability, likelihood-margin, item-bootstrap, and component-bootstrap outputs remain in the four analysis files; the independent audit recomputes the primary contrasts through a separate implementation.

\subsection{Threshold Sensitivity}

\begin{table}[H]
\centering
\caption{Threshold sensitivity. Cells show passes/eligible model--domain cells. Equivalence uses a 90\% CI within $[-\delta,+\delta]$; positive effects require an estimate of at least $\delta$ and a 95\% lower bound above zero. The sweep reports the frozen gates across four operational margins.}
\label{tab:threshold-sensitivity}
\small
\begin{tabular}{llrrrr}
\toprule
Phase & Contrast & 2.5 pp & 5 pp & 7.5 pp & 10 pp \\
\midrule
Phase6 & Marker positive effect & 2/4 & 0/4 & 0/4 & 0/4 \\
Phase6 & Provenance equivalence & 3/4 & 3/4 & 4/4 & 4/4 \\
Phase6 & Marker--provenance positive effect & 2/4 & 0/4 & 0/4 & 0/4 \\
Phase6 & Topology-rescue positive effect & 0/4 & 0/4 & 0/4 & 0/4 \\
Phase8 & Copy--Base equivalence & 0/8 & 0/8 & 0/8 & 0/8 \\
Phase8 & Four-Unit--Base positive effect & 8/8 & 8/8 & 8/8 & 7/8 \\
Phase8 & Four-Unit--Copy positive effect & 5/8 & 4/8 & 4/8 & 2/8 \\
Phase8 & Copy--Prompt positive effect & 3/8 & 3/8 & 2/8 & 2/8 \\
\bottomrule
\end{tabular}
\end{table}

Copy--Base lies outside equivalence in all eight cells at every tested margin, so the invariance result is stable across the 2.5--10-point sweep.  Four-Unit--Base remains positive and material in eight of eight cells through 7.5 points and seven of eight at 10 points.  Selectivity changes from 5/8 at 2.5 points to 2/8 at 10 points, while prompt recovery changes from 3/8 to 2/8.  Model-specific effects and intervals are primary; gate counts provide an operational digest.

\subsection{Controlled-Generation Diagnostics}

\begin{table}[H]
\caption{Diagnostics for controlled answer-plus-one-sentence generation.  Each model--domain row contains 480 generated trials: 40 items by three conditions, two attack sides, and two orders.  Invalid is computed over those 480 trials.  Agreement uses the same matched trials; all are generation-valid and candidate-untied.  The parsed leading answer is the scored endpoint.}
\label{tab:phase10-diagnostics}
\centering
\scriptsize
\begin{tabular}{@{}llrr@{}}
\toprule
Checkpoint & Domain & Invalid (\%) & Choice agreement (\%) \\
\midrule
Qwen3-8B & HUMAN & 0.00 & 65.62 \\
Qwen3-8B & WEB & 0.00 & 70.83 \\
Qwen3-14B-AWQ & HUMAN & 0.00 & 63.75 \\
Qwen3-14B-AWQ & WEB & 0.00 & 74.17 \\
Phi-4 & HUMAN & 0.00 & 65.62 \\
Phi-4 & WEB & 0.00 & 77.08 \\
\bottomrule
\end{tabular}
\end{table}

Every formal output has a valid leading answer, making all-trial and valid-output estimates identical.  Agreement quantifies consistency between the candidate and generated interfaces, which expose different decision rules while their Copy--Base effects share direction.

\section{Run Lineage and Reproducibility}
\label{app:repro}

Inference precision is frozen per checkpoint: the original runs use bfloat16, Qwen3-14B uses its official AWQ checkpoint, Phi-4 uses dynamic NF4, and the Qwen3-8B quantization control uses dynamic NF4.  Candidate scoring is deterministic, generation is greedy, and each experiment freezes its bootstrap seed before treatment inference.  Model-specific chat templates and native assistant termination sequences are fixed during compilation.

Phase 5 completes 1,692/1,692 trials per checkpoint for Qwen3-8B, Qwen3-4B, and Phi.  Phase 6 completes 1,692/1,692 per checkpoint for those three and Mistral.  Phase 7 completes 2,538/2,538 per checkpoint for the two Qwen models and Phi.  Phase 8 completes 3,824/3,824 per checkpoint for all four original models.  The extended candidate grid adds 3,824/3,824 trials in each of three inference configurations; controlled generation adds 960/960 in each of three; grouping stress adds 9,792/9,792 in each of four; matched serialization adds 3,264/3,264 in each of four; the controlled GEO panel adds 768/768 in each of four.  These totals sum to \FinalPaperTrialsRevisionThree{} formal evaluation trials across six unique checkpoint identities.  Quantized reruns contribute executed trials under their existing checkpoint identities.  Phase 11's \PhaseElevenChoiceTrials{} choices require \PhaseElevenCandidateForwardCalls{} physical candidate forward calls, and Phase 12's \PhaseTwelveChoiceTrials{} choices require \PhaseTwelveCandidateForwardCalls{} calls.  This implementation accounting is reported separately from the paper's decision-trial total.  The anonymous ZIP's per-trial output layer spans Phase 6 and Phase 8--12; Phase 5 and Phase 7 are represented by independent internal result audits.

Independent scripts recheck prompt reconstruction, candidate likelihood arithmetic or leading-answer parsing, typed labels, item aggregation, dependence components, and every reported bootstrap statistic.  The grouping, added-checkpoint, controlled-generation, matched-serialization, and controlled-GEO audits all have pass status and zero errors.  The Phase 12 audit verifies four 768-trial model runs, exact key coverage, and item-first 10,000-replicate bootstrap estimates recomputed from raw predictions.  Audit recomputation independently verifies stored outputs and every derived statistic; model inference is represented by the executed forward trials counted above.

The anonymous result-recomputation artifacts use portable paths and identity-scrubbed metadata.  The Phase 9--12 tables, auxiliary curve, generated macros, and JSON snapshots are produced after their independent empirical audits pass status, error-count, and trial-accounting checks.

The anonymous result-recomputation artifact retains the verified Phase 6 and Phase 8--11 layers and adds the complete Phase 12 result-verification extension.  The ZIP is 64,750,897 bytes; its 302 entries sit under one clean root and contain 285 files.  The top manifest covers all 284 non-manifest files.  In a fresh extraction, the combined verifier and standard-library recomputation scripts return \texttt{PASS} with zero failures, including all 3,072 Phase 12 predictions, grouping assignments, five rebuilt paper outputs, and an independently recomputed maximum numerical difference of $5.33\times10^{-15}$.  Recursive scans across the 285 text and manifest files report zero host-path, identity, device-identifier, model-weight, cache, bytecode, or nested-ZIP findings.  The artifact supports stored-output recomputation; fresh forward execution uses public checkpoints and licensed source texts.

\end{document}